%% file: neurips_2026.tex
\documentclass{article}

    \PassOptionsToPackage{numbers, compress}{natbib}
 \usepackage[preprint]{neurips_2026}

\usepackage[utf8]{inputenc} 
\usepackage[T1]{fontenc}    
\usepackage{hyperref}       
\usepackage{url}            
\usepackage{booktabs}       
\usepackage{amsfonts}       
\usepackage{nicefrac}       
\usepackage{microtype}      
\usepackage{xcolor}         

\usepackage[nolist]{acronym}
\usepackage{mathtools}
\usepackage{amsthm}
\usepackage{algorithm}
\usepackage{algpseudocode}
\usepackage{tikz}
\usepackage{graphicx}
\usepackage{multirow}
\usepackage{placeins}
\usetikzlibrary{positioning, arrows.meta, fit, backgrounds, shapes.geometric, calc}
\input{acronyms}

\newtheorem{proposition}{Proposition}
\newtheorem{lemma}{Lemma}
\newtheorem{remark}{Remark}
\usepackage{enumitem}

\algnewcommand\algorithmicinput{\textbf{Input:}}
\algnewcommand\algorithmicoutput{\textbf{Output:}}
\algnewcommand\Input{\item[\algorithmicinput]}
\algnewcommand\Output{\item[\algorithmicoutput]}

\definecolor{routelilac}{HTML}{c5b0d5} 
\definecolor{routepink}{HTML}{f7b6d2} 

\title{Neural Cluster First, Route Second:\\
  One-Shot Capacitated Vehicle Routing via Differentiable Optimal Transport}

\author{%
    Samuel J. K. Chin \\
    MIT \\
    \texttt{jkschin@mit.edu} \\
  \And
  Maximilian Schiffer \\
  TUM \\
  \texttt{schiffer@tum.de} \\
}

\begin{document}

\maketitle
\let\thefootnote\relax\footnotetext{Code will be released upon publication.}

\begin{abstract}
The Capacitated Vehicle Routing Problem (CVRP) underpins modern last-mile logistics. Current Neural Combinatorial Optimization (NCO) methods construct CVRP solutions autoregressively, inheriting sequential decoding bottlenecks, sensitivity to spatial symmetries, and brittle out-of-distribution behavior. We revisit the classical Cluster-First-Route-Second (CFRS) paradigm---long known to be asymptotically optimal but largely overlooked by NCO---and argue that it is structurally aligned with the core strengths of deep learning: similarity and assignment over global context, rather than the construction of long sequential tours. We introduce Neural CFRS, the first purely non-autoregressive one-shot neural CFRS framework for the CVRP. It enforces global fleet-capacity constraints end-to-end via a differentiable entropic Optimal Transport layer, producing a continuous transport plan to sparsify an exact capacitated assignment solver. We provide formal theoretical guarantees that our architecture intrinsically abstracts away $E(2)$ spatial, inter-route permutation, and intra-route traversal symmetries. By equipping the framework with a pre-trained spatial vocabulary, we unlock extreme parameter efficiency and zero-shot scaling. Designed primarily for real-world spatial distributions under a constant capacity setting, Neural CFRS scales robustly to out-of-distribution $N=1000$ instances with a $<4\%$ gap—retaining an approximate $5\%$ gap at this scale even as an ultra-lightweight, single-layer architecture. Furthermore, when deployed out-of-the-box on standard benchmarks, we achieve a highly competitive $2.73\%$ optimality gap on size-100 problems.
\end{abstract}

\section{Introduction}
The \ac{CVRP} underpins modern
last-mile logistics, from parcel delivery to on-demand grocery and
waste collection. Decades of classical \ac{OR} have produced
highly engineered metaheuristics---notably \ac{LKH-3}~\cite{Helsgaun2017AnProblems} and \ac{HGS}~\cite{Vidal2022HybridNeighborhood,Wouda2024PyVRP:Package}---that
deliver near-optimal solutions on instances of a few hundred customers.
Yet these solvers rely on CPU-intensive local search and must be rerun
from scratch on every new instance, which makes them a poor match for the
amortized, latency-sensitive regime of real-world logistics, where the
\emph{same fixed service area} is routed day after day under varying demand---a
structural property that standard i.i.d.\ \ac{CVRP} benchmarks ignore entirely.

\ac{NCO} offers an appealing alternative:
amortize the cost of solving related instances through a learned
model~\cite{Bello2016NeuralLearning,Kool2019AttentionProblems,Vinyals2015PointerNetworks}.
For routing problems, the dominant NCO paradigm is \emph{\ac{AR}
construction}, in which a sequence model emits a tour token by
token~\cite{Kool2019AttentionProblems,Kwon2020Pomo:Learning,Luo2024NeuralGeneralization}.
\ac{AR} methods achieve strong results on small instances, but
three well-documented limitations surface sharply as the problem size
grows: (i) a costly re-encode-per-step decoding pattern required to
remain competitive at
scale~\cite{Drakulic2023BQ-NCO:Optimization,Luo2024NeuralGeneralization};
(ii) sensitivity to the geometric symmetries of the solution space, which
motivated a long line of symmetry-aware
augmentations~\cite{Kim2022Sym-NCO:Optimization,Kwon2020Pomo:Learning};
and (iii) brittle out-of-distribution generalization, especially to
larger or translated instances.
A parallel line of \ac{NAR} work has side-stepped the sequential decoding bottleneck via edge heatmaps.
Yet, these methods are predominantly \ac{TSP} focused and still require an expensive post-hoc search \citep{Joshi2019AnProblem, Fu2021GeneralizeInstances, Qiu2022DIMES:Problems, Min2023UnsupervisedProblem, Ye2024DeepACO:Optimization, Sun2024Difusco:Optimization}.
The few approaches that extend to the \ac{CVRP} \citep{Kool2022DeepProblems} inherit this tour-centric view, leaving capacity feasibility to be enforced entirely by the post-hoc search.
This exposes a critical methodological disconnect: whereas the \ac{TSP} is purely a routing task, solving the \ac{CVRP} fundamentally involves a partitioning problem.
Recognizing this gap naturally motivates a shift away from tour-centric predictions.

Instead of following the existing lines of research in NCO, we revisit a
classical observation from the field of \ac{OR} that,
somewhat surprisingly, the \ac{NCO} community has largely overlooked.
While the CVRP inherently couples both routing and bin-packing—each notoriously difficult in their own right—the classical \ac{CFRS} paradigm demonstrates that the problem can be effectively decomposed.
Once customers are partitioned into capacity-feasible vehicle clusters, the complex global routing challenge elegantly reduces to a collection of independent, localized \acp{TSP} that are bounded in size and highly tractable for modern off-the-shelf solvers. 
The \ac{CFRS} paradigm, developed in the
1970s--1990s~\cite{Bramel1995AProblems,Fisher1981ARouting,Gillett1974AProblem,Renaud1996AnProblem},
exploits exactly this decomposition, while Bienstock et
al.~\cite{Bienstock1993ADemands} proved that \ac{CFRS} heuristics can be
asymptotically optimal while no route-first-cluster-second heuristic can. 
This underlying partitioning structure remains highly relevant today; recent breakthroughs in classical \ac{OR} fundamentally rely on exploiting clusters and capacity limits \citep{Pessoa2021Branch-Cut-and-PriceUncertainty, Silva2025ClusterProblems}, and SISR \citep{Christiaens2020SlackProblems}, a recent simple heuristic, derives state-of-the-art performance by leveraging spatial capacity.
Beyond these insights, we argue that \ac{CFRS} is also
\emph{structurally aligned with what deep learning is good at}: modern
architectures excel at learning similarity and assignment over global
context, and struggle far more with constructing long sequential tours.
Although recent \ac{NCO} methods have explored this decomposition, they remain constrained by sequential or iterative bottlenecks.
For instance, TAM \citep{Hou2023GeneralizeReal-time} employs an \ac{AR} model equipped with a mask function to sequentially emit valid clusters.
Similarly, GLOP \citep{Ye2024Glop:Real-time} utilizes \ac{NAR} heatmaps for partitioning, but ultimately relies on sequential sampling to construct feasible clusters. 
Finally, NCC \citep{Falkner2023NeuralClustering} learns continuous assignment scores to guide an iterative capacitated $k$-means procedure.
In contrast, Neural \ac{CFRS} bypasses sequential construction and iterative refinements entirely and performs a \emph{global, one-shot, differentiable \ac{OT}} assignment in a single forward pass.

A second observation closes the gap between operational reality and the
inductive biases used by prior \ac{NCO} work. In practice, the \ac{CVRP} is solved
repeatedly over a \emph{fixed} geography---the street network of a city,
a set of warehouse districts, a residential delivery zone---with only the
set of active customers and their demands changing from day to day.
Standard NCO benchmarks, by contrast, resample node coordinates i.i.d.\
from a uniform distribution for every instance and thereby force the
model to treat each problem as a fresh point cloud, discarding exactly
the geographic regularities that a deployed solver would encounter every
day. We argue that this fixed \emph{spatial support} is precisely the
object over which a learned \ac{CVRP} policy should amortize. Respecting it
unlocks two complementary technical opportunities: it enables a
per-location learnable embedding that absorbs geographic priors directly
into the representation, and it makes that representation translation-,
rotation-, and reflection-invariant by construction, because raw
coordinates are never consumed in the forward pass.

\paragraph{Problem setting.}
While we provide a formal problem setting in Appendix~\ref{app:problem}, one can summarize its core as follows.
Let $\mathcal{X} \subset \mathbb{R}^2$
denote a fixed, finite \emph{spatial support set}---the static geography
of the service area (e.g., the street network of a city)---with a
designated depot $x_0 \in \mathcal{X}$. Each daily problem instance is
drawn as $(C, \mathbf{d}) \sim \mathbb{P}_{\text{dem}}$, where
$C \subseteq \mathcal{X} \setminus \{x_0\}$ is the active customer subset
and $\mathbf{d} = (d_i)_{x_i \in C}$ its strictly positive demand vector, which must be served by a fleet of $K$ vehicles of uniform capacity $Q$,
routed from $x_0$, at minimum total Euclidean cost. Importantly, only
$(C, \mathbf{d})$ varies across days while $\mathcal{X}$ remains fixed
across the entire instance distribution---a property no prior \ac{NCO}
formulation for \ac{CVRP} exploits. Our goal is to learn a one-shot policy
$\pi_\theta$ that amortizes over $\mathbb{P}_{\text{dem}}$ and
generalizes across realized instances.

\paragraph{Contributions.}
We introduce \textbf{Neural CFRS}, the first purely non-autoregressive neural Cluster-First-Route-Second method for the CVRP. By framing routing as a spatial partitioning task rather than a sequential generation process, we achieve three core breakthroughs. First, we develop a \textbf{differentiable entropic Optimal Transport layer} that enforces global fleet-capacity constraints end-to-end, producing a continuous transport plan in a single forward pass to sparsify an exact capacitated assignment solver. Second, we provide \textbf{formal theoretical guarantees} that our architecture intrinsically abstracts away $E(2)$ spatial, inter-route permutation, and intra-route traversal symmetries. Finally, by equipping the framework with a pre-trained, $E(2)$-invariant "spatial vocabulary", we demonstrate \textbf{extreme parameter efficiency and zero-shot scaling}.
Crucially, Neural CFRS entirely eliminates the reliance on post-hoc local search.
Designed primarily for real-world spatial distributions under a constant capacity setting, the framework scales robustly, solving out-of-distribution $N=1000$ instances in under 30 seconds with a $<4\%$ gap---and remarkably, retains an approximate $5\%$ gap at this scale even as an ultra-lightweight, single-layer architecture. 
Furthermore, when deployed out-of-the-box on standard benchmarks, this search-free paradigm achieves a highly competitive $2.73\%$ optimality gap on CVRP100.

\section{Methodology}
Our methodology is driven by a structural departure from existing neural \ac{NCO} methods: instead of sequentially decoding routes, we formulate the problem as a global, capacity-constrained partitioning task. 
By doing so, complete vehicle routes can be subsequently recovered by solving independent \acp{TSP} in parallel within each cluster.
Building upon this mechanism, the core of our Neural \ac{CFRS} paradigm is to assign $N$ customers to $K$ available vehicles such that the total latent distance parameterized by a neural network is minimized, and vehicle capacity $Q$ is strictly respected.

Let $\Delta_{ij}$ denote the latent distance between customer $i$ and the geographic anchor of vehicle $j$. 
The true assignment is a binary matrix $Y^* \in \{0, 1\}^{N \times K}$, where $Y^*_{ij} = 1$ if customer $i$ is assigned to vehicle $j$. 
The exact \ac{CAP} thus seeks to minimize total distance:
\begin{equation}
\label{eqn:cap}
\begin{aligned} 
\min_{Y} \quad & \sum_{i=1}^{N} \sum_{j=1}^{K} \Delta_{ij} Y_{ij} \\ 
\end{aligned}
\end{equation}
subject to the capacity constraints $\sum_{i=1}^{N} q_{i} Y_{ij} \le 1.0$ for all $j \in \{1, \dots, K\}$, and the assignment constraints $\sum_{j=1}^{K} Y_{ij} = 1$ for all $i \in \{1, \dots, N\}$. 
Here, $q_i = d_i / Q$ is the normalized fractional capacity required by customer $i$. 
While exact solvers can recover the optimal discrete clusters, integrating them into the forward pass of the neural network is computationally prohibitive and non-differentiable, severing the gradient flow required to learn $\Delta_{ij}$.

To make this compatible with end-to-end deep learning, Neural \ac{CFRS} generates the latent distances $\Delta_{ij}$ and computes a continuous relaxation across three distinct phases. 
We give an overview of each section with reference to the full architecture depicted in Figure \ref{fig:architecture}.
In Section \ref{sec:inputs}, we first introduce our \textbf{pre-trained} input representation and attention masking strategy, which leverages $E(2)$-invariant learnable embeddings and a $k$-Nearest Neighbor ($k$-NN) mask to improve generalization.
In Section \ref{sec:seed_generation}, we describe the \textbf{seed generation phase}, where a \ac{ST} identifies a subset of nodes to serve as the initial geographic anchors for the vehicle fleet.
In Section \ref{sec:global_assignment}, we outline the node-to-assignment \textbf{clustering phase}, detailing how a \ac{CT} evaluates bipartite compatibility to yield the latent distances $\Delta_{ij}$ for the differentiable \ac{OT} layer.
Crucially, we also derive the continuous relaxation of the \ac{CAP} defined in Equation \eqref{eqn:cap} and frame it as an \ac{OT} problem, unlocking end-to-end training via the efficient \ac{SK} algorithm.
Finally, in Section \ref{sec:or_decoding}, we present \textbf{\ac{OR} decoding} strategies that leverage the continuous transport plan to sparsify the search space to recover the final optimal routes.

\begin{figure}[t]
    \centering
    \resizebox{\linewidth}{!}{%
        \input{architecture.tex}%
    }
    \caption{The end-to-end Neural \ac{CFRS} architecture. A \ac{SMAE} (optional if using $(x, y)$ inputs) is first pre-trained over the fixed spatial support $\mathcal{X}$ to extract $E(2)$-invariant representations $\Psi$. Phase 1 utilizes \ac{ST} to construct node embeddings $H$ and a capacity-aware greedy decoding step to dynamically extract a discrete set of $K$ geographic anchor seeds $C$. In Phase 2, type embeddings $E_{\text{type}}$ are first added to differentiate between the depot, customer, and seed nodes. Then, \ac{CT} evaluates bipartite compatibility between customer and seed nodes to produce latent distances $\Delta$. These distances are processed by a differentiable \ac{OT} layer via the log-domain \ac{SK} algorithm to output a continuous, capacity-compliant transport probability matrix $\hat{Y}$. Finally, Phase 3 leverages $\hat{Y}$ to threshold and sparsify the decision variables, constraining the search space for exact solvers to efficiently recover optimal clusters, which are then solved by an off-the-shelf \ac{TSP} solver.}
    \label{fig:architecture}
\end{figure}

\subsection{$k$-Nearest Neighbor Attention Masking and the Spatial Support Prior ($\Psi$)}
\label{sec:inputs}

In the \ac{CVRP}, optimal solutions naturally form local geographic clusters, rendering interactions between distant nodes irrelevant to the routing decision. 
To exploit this spatial property and prevent the attention mechanism from collapsing under distributional shifts on larger graphs, we introduce a $k$-NN attention mask. 
This mask restricts the receptive field of each node to its $k$ closest neighbors in 2D Euclidean space.
By enforcing this strict topological prior, we ensure that the local neighborhood structures evaluated during inference remain highly consistent with those observed during training, enabling zero-shot generalization to larger \ac{CVRP} instances. 

To construct $E(2)$-invariant geometric representations, we pre-train a \ac{SMAE} over the entire fixed spatial support $\mathcal{X}$.
By omitting explicit positional encodings and relying exclusively on a $k_{\text{attn}}$-NN adjacency mask, the model learns structural information purely from local topological context. 
Using a spatial block-masking strategy, the \ac{SMAE} processes the graph to output contextualized spatial embeddings, $\Psi \in \mathbb{R}^{|\mathcal{X}| \times d/2}$. 
These embeddings are optimized via a joint objective to reconstruct both the exact pairwise Euclidean distances and the broader $k_{\text{target}}$-NN connectivity ($k_{\text{attn}} < k_{\text{target}}$) of the masked regions.
Full architectural and pre-training details are provided in Appendix~\ref{app:smae}.

Following pre-training, the $\Psi$ embeddings are frozen and utilized in the downstream architecture. Since a \ac{CVRP} node is defined by two core concepts—its spatial context and its demand—we create a balanced input representation as follows:
\begin{equation}
    x_i = \Psi[i] \oplus \phi_{\text{demand}}(d_i)
    \label{eq:node_embedding}
\end{equation}
where $\Psi[i]$ indexes the $i$-th row of the global vocabulary, $\oplus$ denotes concatenation, and $\phi_{\text{demand}}: \mathbb{R} \to \mathbb{R}^{d/2}$ is a linear projection. This equal dimensionality ensures both concepts carry equal representational weight in the input. The final routing sequence is then assembled by prepending the depot embedding $x_0 \in \mathbb{R}^d$ to the active customer set $\{x_1, \dots, x_N\}$.

\subsection{Phase 1: Seed Generation}
\label{sec:seed_generation}
The primary objective of \acf{ST} is to identify a subset of nodes to serve as initial cluster centers, or ``seeds'', around which the final vehicle routes are constructed. Architecturally, \ac{ST} is a standard Transformer encoder \citep{Vaswani2017AttentionNeed} augmented with a $k$-NN attention mask. Given the invariant base feature representations $X = \{x_0, x_1, \dots, x_N\}$, where $x_0$ represents the depot node, \ac{ST} computes the seed embeddings $H = \{h_0, h_1, \dots, h_N\}$. From these seed embeddings, we project two distinct task heads for seed selection and contrastive representation learning.

\paragraph{Seed Probability and Contrastive Embedding Head.} A two-layer \ac{MLP} $\phi_{\text{seed}}$ estimates the marginal probability $p_i = \sigma(\phi_{\text{seed}}(h_i))$ of each customer $i \in \{1, \dots, N\}$ acting as a seed.
This projection is supervised using the \ac{BCE} loss, $\mathcal{L}_{\text{seed}}$, with 
ground-truth seed indicators derived from expert solutions by selecting the $m$ most distal nodes from the depot within each route. While any node could theoretically serve as a seed, peripheral nodes are ideal candidates since the round-trip cost to the furthest node is a strong first-order approximation of total route distance.
Crucially, maintaining a candidate seed pool of size $m > 1$ prevents the pathological repulsion of similar intra-cluster nodes within the contrastive space.
This augmentation acts as an implicit regularizer, mitigating exposure bias and preventing overfitting.

To encourage clustering, we project $h_i$ into a latent contrastive space $z_i = \phi_{\text{con}}(h_i)$, where $\phi_{\text{con}}$ is a two-layer \ac{MLP}.
This yields a set of latent representations $Z = \{z_1, \dots, z_N\}$.
To pull the representations of nodes belonging to the same cluster closer together while pushing apart those from different clusters, we apply a supervised contrastive loss\citep{Khosla2020SupervisedLearning}:
\begin{equation}
\mathcal{L}_{\text{con}} = - \sum_{i=1}^N \frac{1}{|\mathcal{P}(i)|} \sum_{p \in \mathcal{P}(i)} \log \frac{\exp(\text{sim}(z_i, z_p) / \tau)}{\sum_{a \neq i} \exp(\text{sim}(z_i, z_a) / \tau)}    
\end{equation}
where $\mathcal{P}(i)$ contains all other nodes in $i$'s optimal cluster, $\text{sim}(\cdot, \cdot)$ is the cosine similarity, and $\tau$ is a temperature hyperparameter.
The loss of \ac{ST} is thus given by $\mathcal{L}_{\text{ST}} = \mathcal{L}_{\text{seed}} + \mathcal{L}_{\text{con}}$.

\paragraph{Seed Selection.} We estimate the required fleet size $K$ using the standard capacity lower bound, $K = \lceil \sum_{i=1}^N d_i / Q \rceil$, noting packing inefficiencies may occasionally underestimate the optimal $K$ (Appendix~\ref{app:k_plus_one}). To instantiate these $K$ seeds with spatial dispersion (Appendix~\ref{app:cagd}), we iteratively select the unassigned node with the highest $p_i$ to initialize a new seed. We then exhaust vehicle capacity $Q$ by greedily packing the most similar unassigned nodes, measured via distance in the contrastive space $Z$. This repeats until all $K$ seeds are selected. 
Extracting their corresponding embeddings from $H$ forms the seed representation set $C = \{c_1, \dots, c_K\}$. Treating each seed as a geographic anchor effectively initializes the $K$ distinct vehicle routes.

\subsection{Phase 2: Global Node-to-Cluster Assignment}
\label{sec:global_assignment}
The primary objective of the \acf{CT} is to learn the latent distances $\Delta_{ij}$.
However, optimizing these distances directly through the differentiable \ac{OT} layer presents a challenge as the iterations from the \ac{SK} algorithm can degrade gradient flow.
To resolve this, we seamlessly map the distance minimization objective in Equation~\eqref{eqn:cap} to an equivalent maximization problem over compatibility scores derived from $\Delta_{ij}$.
Specifically, \ac{CT} predicts the true node-to-cluster assignment matrix $Y^* \in \{0, 1\}^{N \times K}$ by computing a dense compatibility matrix $S \in \mathbb{R}^{N \times K}$, where $s_{ij}$ represents the affinity between customer $i$ and seed $j$.
Crucially, to ensure $Y^*$ strictly maintains a valid categorical probability distribution during training, we employ teacher forcing for the seed selection. 
Rather than relying on the dynamically selected seeds from Phase 1—which could theoretically yield duplicate seeds for the same expert route—we construct the target assignment matrix using the exact ground-truth seeds. 
By assigning the target columns in $Y^*$ directly to these forced, uniquely distinct routes, we naturally break permutation symmetry and bypass unstable bipartite matching objectives (e.g., Hungarian method) while ensuring stable gradient flow during the optimal transport phase.
Anchored by the seeds identified in \ac{ST}, \ac{CT} frames this assignment as a bipartite matching task, employing a Transformer encoder \citep{Vaswani2017AttentionNeed} with a $k$-NN attention mask.

\paragraph{Input Representation and Type Embeddings.}The inputs to \ac{CT} consist of the node representation embeddings obtained from \ac{ST}, $H = \{h_0, h_1, \dots, h_N\}$, augmented with the dynamically selected set of $K$ seed representations $C = \{c_1, \dots, c_K\}$. 
To enable the self-attention mechanism to distinguish between the distinct structural roles of these elements, we inject learnable type embeddings $E_{\text{type}} \in \mathbb{R}^{3 \times d_{\text{model}}}$. Corresponding type embeddings are respectively added to the depot node, customer nodes, and candidate seeds:
\begin{equation}
\tilde{h}_0 = h_0 + E_{\text{type}}^{(0)}, \quad \tilde{h}_i = h_i + E_{\text{type}}^{(1)} \ \forall i \in \{1,\dots,N\}, \quad \tilde{c}_j = c_j + E_{\text{type}}^{(2)} \ \forall j \in \{1,\dots,K\}
\end{equation}
The sequence formed by concatenating these discrete representational sets, $[\tilde{h}_0, \tilde{h}_1, \dots, \tilde{h}_N, \tilde{c}_1, \dots, \tilde{c}_K]$, serves as the initial state for the \ac{CT}.
For the $k$-NN attention mask, each seed inherits the exact masking of the customer node from which it was initialized.

\paragraph{Assignment Scoring Head.} 
After processing the initial sequence through the \ac{CT} and applying $L_2$-normalization over each embedding, the final contextualized sequence is partitioned back into its constituent elements: $[\bar{h}_0, \bar{H}, \bar{C}]$. The outputs consist of the updated depot embedding $\bar{h}_0$, the refined customer embeddings $\bar{H} = \{\bar{h}_1, \dots, \bar{h}_N\}$, and the refined seed embeddings $\bar{C} = \{\bar{c}_1, \dots, \bar{c}_K\}$. 
The clustering assignments are then derived by evaluating the affinity between each customer and seed. 
We define $\Delta_{ij} = \| \bar{h}_i - \bar{c}_j \|_2 \in [0, 2]$ as the latent Euclidean distance between the $L_2$-normalized embeddings of customer $i$ and seed $j$; we specifically choose this spatial formulation because the \ac{OT} layer requires cost matrices to be strictly non-negative.
The continuous logit compatibility matrix $S \in \mathbb{R}^{N \times K}$ is thus computed via learnable parameters $\gamma$ and $\beta$, such that $s_{ij} = -\text{softplus}(\gamma) \Delta_{ij} + \beta$. Crucially, mapping these bounded distances into an adaptable logit space allows the gradients to saturate on confident predictions, enabling the model to focus its learning capacity entirely on ambiguous, boundary assignments. Finally, this logit matrix is supervised via the ground-truth node-to-cluster assignments $Y^* \in \{0, 1\}^{N \times K}$ using a \ac{BCE} loss ($\mathcal{L}_{\text{BCE}}$).

\paragraph{Continuous Relaxation via Optimal Transport.}
To enable end-to-end supervision that enforces capacity constraints, we relax the discrete \ac{CAP} into a continuous \ac{OT} problem. 
We reframe the $K$ vehicles as source nodes, each possessing a normalized capacity of $1.0$, and the $N$ customers as sink nodes demanding a fractional capacity $q_i = d_i / Q$. 
To enforce exact mass conservation since fleet capacity exceeds demand ($K > \sum_{i=1}^{N} q_i$), we introduce a dummy "slack" customer (index 0) with demand $q_0 = K - \sum_{i=1}^{N} q_i$ and $\Delta_{0j}=0$.
Instead of optimizing the discrete $Y_{ij}$, we optimize over a continuous transport plan $\Pi \in \mathbb{R}_{+}^{(N+1) \times K}$.
To make this fully differentiable, we introduce an entropic regularization term $\mathcal{H}(\Pi) = \sum_{i,j} \Pi_{ij} \log \Pi_{ij}$, yielding a strictly convex \ac{OT} formulation:
\[
\min_{\Pi} \sum_{i=0}^{N} \sum_{j=1}^{K} \Delta_{ij} \Pi_{ij} + \epsilon \mathcal{H}(\Pi)
\]
subject to the marginal constraints $\sum_{j} \Pi_{ij} = q_i$ and $\sum_{i} \Pi_{ij} = 1.0$. 
This formulation allows us to recover the optimal transport plan $\Pi^*$ efficiently via the log-domain \ac{SK} algorithm \citep{Cuturi2013SinkhornTransport, Feydy2019InterpolatingDivergences}.
Since $\Delta$ is strictly bounded in $[0, 2]$, we bypass the need for complex annealing schedules and fix $\epsilon$ to $0.01$ during training and $0.001$ during inference.
By discarding the dummy customer row and row-normalizing the fractional mass by the true customer demand $q_i$, we obtain a valid, capacity-compliant categorical probability distribution over the available vehicles: $\hat{Y}_{ij} = \Pi^*_{ij} / q_i$. This differentiable assignment probability acts as a continuous surrogate for $Y^*$ and is supervised via cross-entropy ($\mathcal{L}_{\text{OT}}$).
The loss of \ac{CT} is thus given by $\mathcal{L}_{\text{CT}} = \mathcal{L}_{\text{BCE}} + \mathcal{L}_{\text{OT}}$.

\paragraph{End-to-End Optimization}
The complete architecture is trained end-to-end. Although the intermediate seed initialization relies on a discrete greedy decoding step, gradients propagate seamlessly from the \ac{CT} back to the \ac{ST} via the representations $\tilde{H}$ and $\tilde{C}$. The loss formulation has two distinct components and is given as follows (omitting scalar weights for clarity):
\begin{equation}
    \mathcal{L}_{\text{total}} = \underbrace{\big( \mathcal{L}_{\text{seed}} + \mathcal{L}_{\text{con}} \big)}_{\text{Phase 1: Seed Loss } (\mathcal{L}_{\text{ST}})} + \underbrace{\big(\mathcal{L}_{\text{BCE}} + \mathcal{L}_{\text{OT}}\big)}_{\text{Phase 2: Clustering Loss } (\mathcal{L}_{\text{CT}})}
\end{equation}

\subsection{Phase 3: \acf{OR} Decoding}
\label{sec:or_decoding}

Recall that our primary objective is to solve the discrete \ac{CAP} (Equation~\eqref{eqn:cap}) using the learned cost matrix $\Delta$. 
While passing $\Delta$ directly to an exact solver can potentially yield optimal solutions, $\bar{Y}_{\text{exact}}$, the problem may become intractable as routing instances scale and may require longer run times to obtain the optimal solution.
Instead, $\hat{Y}$ (derived from $\Pi$) provides a highly informative "hint" that can guide and accelerate the solver in two ways. \textbf{(i) Confident Hard-Assignment (Node Fixing):} We establish a confidence threshold $\tau_{\text{high}}$ (e.g., $0.99$). For any customer $i$ where $\max_j \hat{Y}_{ij} > \tau_{\text{high}}$, we trivially fix the decision variable $\bar{Y}_{ij} = 1$, deducting its demand from the vehicle's remaining capacity $Q$. 
The remaining unassigned nodes form a smaller subproblem, yielding the final solution $\bar{Y}_{\text{fixed}}$. 
If hard assignment makes the problem infeasible within the time limit, we randomly prune 10\% of the assigned nodes and re-solve until a solution is found.
This effectively restricts the exact solver to evaluating only the model's low-confidence boundary states. 
\textbf{(ii) Sinkhorn-Guided Sparsification (Edge Selection):} To reduce the number of binary decision variables in the \ac{CAP}, we threshold the transport plan at $\tau_{\text{low}} = 10^{-4}$. Valid edges for the exact solver are restricted to $\mathcal{E}_{\text{valid}} = \{ (i, j) \mid \hat{Y}_{ij} \ge \tau_{\text{low}} \}$. To preserve graph connectivity and promote feasibility, we augment $\mathcal{E}_{\text{valid}}$ with edges connecting each node to its $k$-nearest vehicle seeds (derived from $\Delta$), where $k = \lceil 0.02N \rceil$.
If necessary, $k$ is iteratively incremented until \ac{CAP} feasibility is restored, yielding the final discrete assignment $\bar{Y}_{\text{select}}$ which partitions the nodes into independent \ac{TSP} subproblems.

\paragraph{Symmetry Abstraction by Construction} 
A well-documented bottleneck of \ac{AR} \ac{NCO} is its vulnerability to geometric and sequence symmetries \citep{Kwon2020Pomo:Learning, Kim2022Sym-NCO:Optimization}. 
Sequence models are forced to arbitrarily pick a decoding order, creating target ambiguity in training and requiring expensive data augmentations. 
As Neural CFRS is fundamentally a partitioning framework, it evaluates unordered sets rather than sequences, gracefully resolving these symmetries by construction. 
We formalize this complete symmetry abstraction under spatial support through the following results (proven in Appendix~\ref{app:symmetry}):
\begin{itemize}[leftmargin=*, label={}]
    \item \textbf{[L1] Isometric Invariance of the Input Space:} The symmetric Euclidean CVRP objective and feasible set are strictly invariant to $E(2)$ spatial transformations.
    \item \textbf{[L2] Edge-Set Formulation of the Solution Space:} Any valid routing solution is fundamentally characterized as an unordered set of undirected, edge-disjoint cycles.
    \item \textbf{[P1] Neural CFRS $E(2)$ Invariance:} By relying purely on relative local topology rather than absolute coordinates, the proposed pipeline is strictly $E(2)$-invariant.
    \item \textbf{[P2] Inter-Route Permutation Invariance:} The inference pipeline mathematically collapses intermediate equivariant cluster assignments into strict global permutation invariance via a set-theoretic union.
    \item \textbf{[P3] Intra-Route Traversal Invariance:} By delegating sequence recovery to an exact \ac{TSP} solver, the framework outputs undirected cycles, ensuring invariance to traversal directions.
\end{itemize}

\section{Experiments}
\label{sec:experiments}
To rigorously evaluate the Neural \ac{CFRS} framework, our analysis is structured to address the core facets of modern \ac{NCO}. In the following sections, we demonstrate our model's zero-shot generalization and scaling behavior, benchmark its performance against state-of-the-art baselines, and isolate its architectural drivers through targeted ablations. We also push the computational limits of the framework by evaluating a highly efficient single-layer architecture. Finally, we analyze the effectiveness of our sparsification strategy, establishing the relationship between the \ac{CAP} optimality gap and the final routing optimality gap.

\paragraph{Dataset and Training} 
We first construct the spatial support dataset by defining a "city" consisting of one depot and $|\mathcal{X}| = 3000$ potential customer locations uniformly sampled over the unit square; all problem instances are subsequently sub-sampled from this fixed spatial support and following prior work, demand $d$ is drawn from a uniform distribution, $d \sim \mathcal{U}(1, 9)$.
For training, we generate 1 million $N=100$ instances. 
For evaluation, we follow \citep{Drakulic2023BQ-NCO:Optimization, Luo2024NeuralGeneralization} and generate 10,000 test instances for $N = 100$, and 128 test instances for each scale $N \in \{200, 500, 1000\}$. 
All instances are solved with \ac{HGS} to near-optimality using the recommended time limit of $T = 240 \times (N/100)$ seconds per instance \citep{Vidal2022HybridNeighborhood}.
Our model is evaluated with real-world operational constraints, where there is a constant vehicle capacity of $Q=50$ across all problem sizes.
In addition, we also train and test our model with the data provided by \citep{Luo2024NeuralGeneralization} and show that our method is competitive even without the spatial support.
Our full training setup can be found in Appendix \ref{app:training-setup}.

\paragraph{Baselines} 
For the constant capacity setting—where we evaluate zero-shot generalization and the spatial support prior—we compare primarily against \ac{HGS} \citep{Vidal2022HybridNeighborhood} utilizing the PyVRP package \citep{Wouda2024PyVRP:Package}.
For the standard benchmark setting, we compare against \ac{LKH-3} \citep{Helsgaun2017AnProblems}. 
To validate our architectural paradigm, we evaluate against classical \ac{CFRS} methods (Sweep~\citep{Gillett1974AProblem}, Petal~\citep{Foster1976AnProblem}, Fisher \& Jaikumar~\citep{Fisher1981ARouting}) alongside recent neural clustering approaches \citep{Falkner2023NeuralClustering, Hou2023GeneralizeReal-time, Ye2024Glop:Real-time}. 
Finally, we benchmark against a suite of autoregressive \ac{NCO} solvers (POMO, BQ, LEHD) \citep{Kwon2020Pomo:Learning, Drakulic2023BQ-NCO:Optimization, Luo2024NeuralGeneralization} under the greedy setting.

\paragraph{Zero-Shot Generalization, Parameter Efficiency, and Ablations.} 
To evaluate the zero-shot generalization capabilities of our model on larger instances ($N \ge 200$), we assess its scaling behavior under a constant capacity ($Q=50$) setting. 
As shown in Table~\ref{tab:main_results}, the full Neural CFRS model generalizes robustly to large out-of-distribution graphs without requiring post-hoc local search. 
Relying purely on raw coordinates without $\mathcal{L}_{\text{BCE}}$ achieves a highly competitive 3.26\% optimality gap on $N=1000$ instances using hard assignment decoding. 
To isolate the representational power of our pre-trained spatial support, we evaluated the framework under extreme model compression. 
Remarkably, a single-layer architecture with only 682K parameters retains an average gap of $\approx 5\%$ across all zero-shot scales when utilizing the spatial prior ($\Psi$), empirically confirming that the global topological embeddings effectively offload the burden of geometric representation learning (see Appendix~\ref{app:one-layer}).
Furthermore, targeted ablations reveal an intriguing phase transition: while the spatial support ($\Psi$) and assignment loss ($\mathcal{L}_{BCE}$) tightly constrain the optimality gap for smaller instances ($N \le 200$), removing them yields superior zero-shot scaling on large graphs ($N \ge 500$), which we hypothesize prevents the model from overfitting to local topological memorization (see Table~\ref{tab:main_results} and Appendix~\ref{app:ablations}).
We defer the full details of our results to the relevant appendices.

\begin{table}[t]
  \caption{\textbf{Main Results.} \textbf{(a)} Performance under our primary constant capacity operational setting, demonstrating robust zero-shot generalization up to $N=1000$. \textbf{(b)} Performance on the standard benchmarks for CVRP100. Generalization scales ($N \ge 200$) for standard benchmarks are deferred to Appendix \ref{app:sota-comparison}.}
  \label{tab:main_results}
  \vspace{2mm}
  \centering
  
  \resizebox{0.85\textwidth}{!}{%
    \begin{tabular}[t]{@{}c@{}}
      \textbf{(a) Constant Capacity Setting ($Q=50$)} \\[1.5mm]
      \begin{tabular}{@{}clrr@{}}
        \toprule
        & \textbf{Model Variant} & \textbf{Gap} & \textbf{Time} \\
        \midrule
        \multirow{4}{*}{\rotatebox[origin=c]{90}{\textbf{$N=100$}}} 
        & Neural CFRS ($\Psi + \mathcal{L}_{BCE}$)$^E$     & \textbf{3.03}\% & 1.3m \\
        & Neural CFRS ($x,y$ w/o $\mathcal{L}_{BCE}$)$^E$  & 3.28\% & 1.7m \\
        & 1-layer ($\Psi + \mathcal{L}_{BCE}$)$^E$         & 4.82\% & 1.6m \\
        & 1-layer ($x,y$ w/o $\mathcal{L}_{BCE}$)$^E$      & 5.22\% & 1.3m \\
        \midrule
        
        \multirow{4}{*}{\rotatebox[origin=c]{90}{\textbf{$N=200$}}}
        & Neural CFRS ($\Psi + \mathcal{L}_{BCE}$)$^S$     & \textbf{3.79}\% & 5.1s \\
        & Neural CFRS ($x,y$ w/o $\mathcal{L}_{BCE}$)$^E$  & 3.86\% & 6.8s \\
        & 1-layer ($\Psi + \mathcal{L}_{BCE}$)$^S$         & 5.40\% & 6.3s \\
        & 1-layer ($x,y$ w/o $\mathcal{L}_{BCE}$)$^S$      & 5.97\% & 5.2s \\
        \midrule
        
        \multirow{4}{*}{\rotatebox[origin=c]{90}{\textbf{$N=500$}}}
        & Neural CFRS ($\Psi + \mathcal{L}_{BCE}$)$^S$     & 4.46\% & 35s \\
        & Neural CFRS ($x,y$ w/o $\mathcal{L}_{BCE}$)$^E$  & \textbf{4.22}\% & 41s \\
        & 1-layer ($\Psi + \mathcal{L}_{BCE}$)$^H$         & 5.95\% & 1.3m \\
        & 1-layer ($x,y$ w/o $\mathcal{L}_{BCE}$)$^H$      & 5.58\% & 40s \\
        \midrule
        
        \multirow{4}{*}{\rotatebox[origin=c]{90}{\textbf{$N=1000$}}}
        & Neural CFRS ($\Psi + \mathcal{L}_{BCE}$)$^H$     & 3.77\% & 10m \\
        & Neural CFRS ($x,y$ w/o $\mathcal{L}_{BCE}$)$^H$  & \textbf{3.26\%} & 21m \\
        & 1-layer ($\Psi + \mathcal{L}_{BCE}$)$^H$         & 5.32\% & 4.4m \\
        & 1-layer ($x,y$ w/o $\mathcal{L}_{BCE}$)$^H$      & 5.08\% & 5.3m \\
        \bottomrule
      \end{tabular}
    \end{tabular}%
    
    \hspace{4mm} 
    
    \begin{tabular}[t]{@{}c@{}}
      \textbf{(b) Standard Benchmark (CVRP100)} \\[1.5mm]
      \begin{tabular}{@{}lrr@{}}
        \toprule
        \textbf{Method} & \textbf{Gap} & \textbf{Time} \\
        \midrule
        \multicolumn{3}{@{}l}{\textit{\textbf{Classical OR \& Heuristics}}} \\
        ~ LKH-3                & 0.00\% & 12h \\
        ~ Sweep-LKH3           & 16.10\% & 2.5m \\
        ~ Petal                & 8.65\% & 31m \\
        ~ Fisher-Jaikumar-LKH3 & 6.75\% & 18m \\
        \addlinespace
        
        \multicolumn{3}{@{}l}{\textit{\textbf{Neural Autoregressive}}} \\
        ~ POMO (greedy)        & 3.16\% & 4.7s \\
        ~ BQ (greedy)          & 2.99\% & 0.7m \\
        ~ LEHD (greedy)        & 3.65\% & 0.5m \\
        \addlinespace
        
        \multicolumn{3}{@{}l}{\textit{\textbf{Neural Clustering / NAR}}} \\
        ~ Neural Cap. Cluster  & 10.09\% & 5.2m \\
        ~ TAM-LKH3             & 3.21\% & 0.86s \\
        ~ GLOP-LKH3            & 22.70\% & 16m \\
        \addlinespace
        
        \multicolumn{3}{@{}l}{\textit{\textbf{Neural CFRS (Ours)}}} \\
        ~ Exact Decoding       & \textbf{2.73\%} & 1.4m \\
        ~ Sparsified           & 3.75\% & 1.2m \\
        ~ Hard Assignment      & 3.86\% & 1.2m \\
        \bottomrule
      \end{tabular}
    \end{tabular}%
  } 
  
  \vspace{3mm}
  \parbox{\textwidth}{\scriptsize \textit{Note:} Superscripts $E, S,$ and $H$ in Table (a) indicate the best-performing decoding configuration for that row (Exact, Sparsified, and Hard Assignment, respectively). Models in Panel (a) represent the best configurations identified in our ablation study.}
\end{table}

\paragraph{Sparsification Efficiency and CAP Optimality.} 
As exact \ac{CAP} solutions become computationally intractable as routing instances scale, we leverage the continuous transport plan $\hat{Y}$ to sparsify the exact solver's search space. 
As illustrated in Figure \ref{fig:mip_precision_recall}, thresholding $\hat{Y}$ is highly calibrated: across $N \in \{100, 200\}$, a permissive threshold ($\tau=0.01$) retains $>95\%$ of optimal edges with $>80\%$ precision, while a strict threshold ($\tau=0.99$) correctly hard-assigns $>70\%$ of nodes with a minimal $\approx3\%$ error rate. 
This Sinkhorn-guided sparsification drastically restricts the exact solver to evaluating only the model's uncertain boundary states. 
Crucially, the resulting \ac{CAP} optimality gap exhibits a highly non-linear impact on the final \ac{CVRP} routing cost. 
As shown in Figure~\ref{fig:optimality-gap}, early termination of the exact search yields stagnant routing improvements, but breaching a 1\% CAP gap triggers a sharp phase transition manifesting as a steep plunge in routing optimality. 
Notably, on large $N=1000$ problems, models trained with $\mathcal{L}_{BCE}$ face initial computational resistance—requiring roughly 45 seconds to reach a 16\% CAP gap—but subsequently enable explosive convergence, needing only 10 additional seconds to locate highly optimal assignments. 
This delayed convergence suggests the learned manifold initially stalls but ultimately accelerates the integer programming search.

\paragraph{Performance on Standard Benchmarks}
On standard benchmarks, Neural CFRS demonstrates strong out-of-the-box performance without any distribution-specific tailoring. 
Under a strict one-shot inference budget, the exact decoding variant achieves an average optimality gap of $2.73\%$ on CVRP100 (see Table~\ref{tab:main_results}), outperforming contemporary greedy \ac{AR} solvers like LEHD and BQ, and notably yielding negative optimality gaps (surpassing LKH-3) on 174 instances, with the two best-performing cases yielding improvements of 1.88\% and 1.65\% following sub-second neural inference and MIP solving (see Appendix~\ref{app:sota-comparison}).

\begin{figure}[ht]
    \centering
    \includegraphics[width=0.9\textwidth]{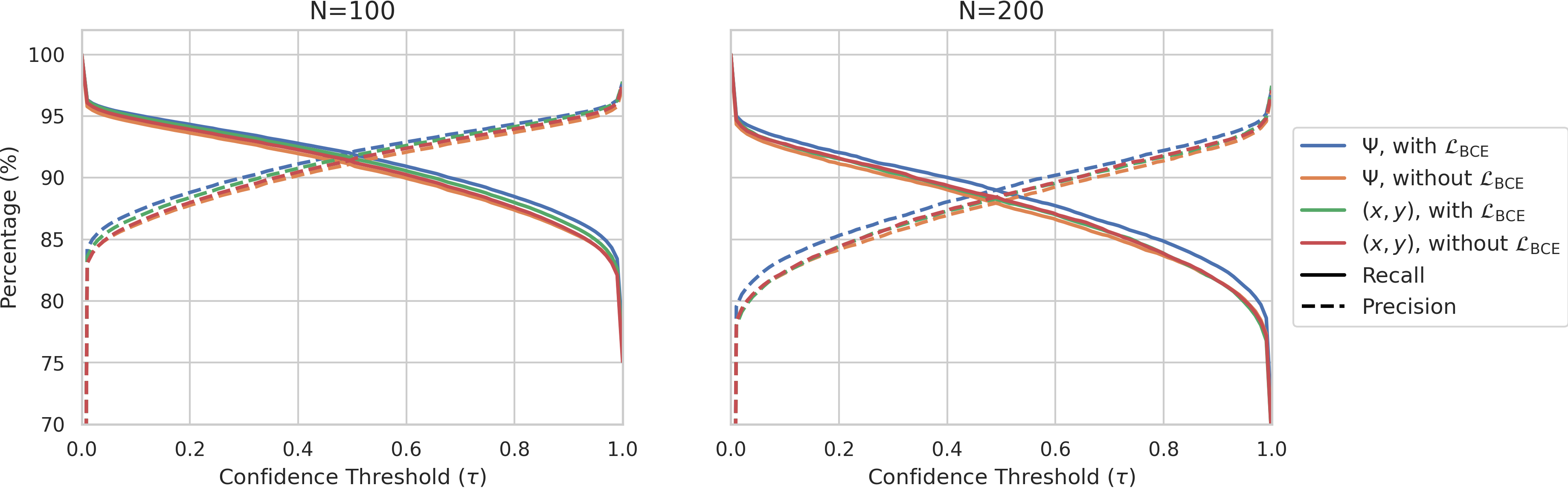}
    \caption{$\hat{Y}$ precision and recall when compared against $\bar{Y}_{\text{exact}}$. At $\tau=0^+$, recall and precision is above 95\% and 80\% respectively, indicating that $\approx5\%$ of optimal edges are left out. At $\tau=1^-$, recall and precision is above 70\% and 97\% respectively, indicating that $\approx3\%$ of nodes are assigned to suboptimal vehicles.}
    \label{fig:mip_precision_recall}
\end{figure}

\begin{figure}[htbp]
    \centering
    
    \includegraphics[width=0.9\textwidth]{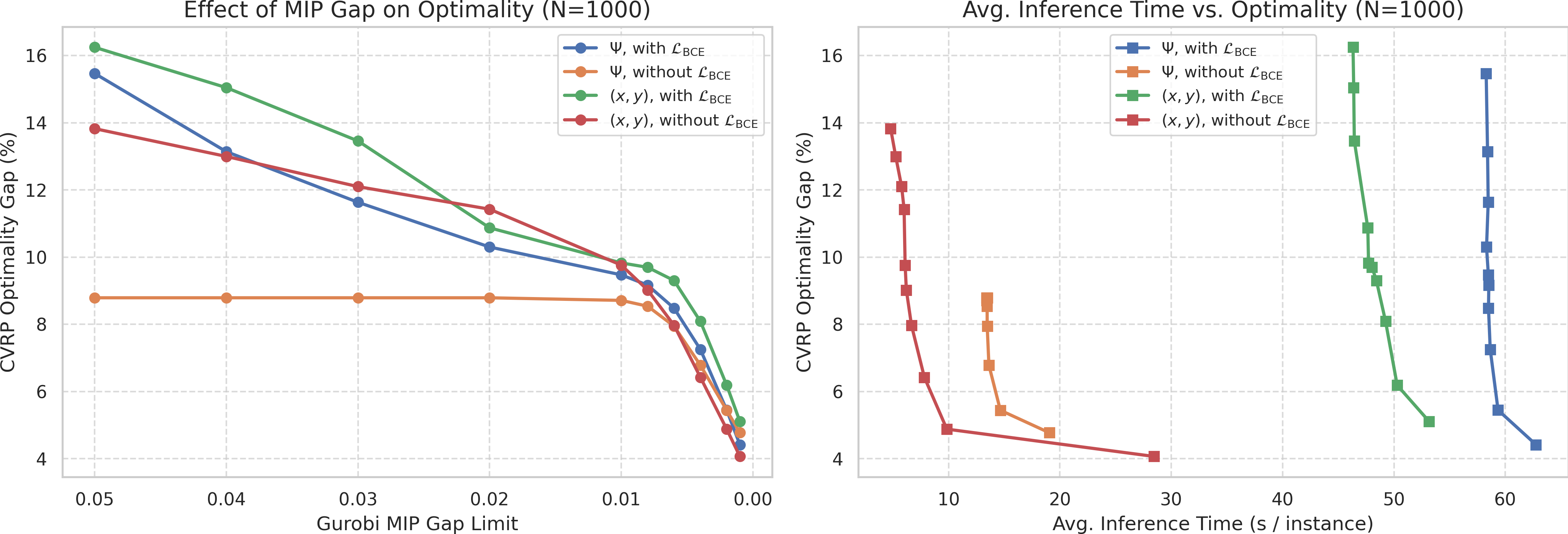}
    \caption{Pareto frontier of the solver's computational efficiency, mapping the MIP gap limit against both the final \ac{CVRP} optimality gap and the average total inference time (seconds/instance). The analysis evaluates out-of-distribution scaling ($N=1000$), comparing the spatial support embeddings ($\Psi$) against baseline raw coordinates ($(x,y)$), with and without $\mathcal{L}_{\text{BCE}}$.}
    \label{fig:optimality-gap}
\end{figure}

\section{Conclusion and Limitations}
We present Neural CFRS, the first fully non-autoregressive neural solver for the \ac{CVRP} aligned with the Cluster-First-Route-Second decomposition. By replacing sequential decoding with a differentiable \ac{OT} framework over a fixed spatial support, we enforce capacity constraints in a highly parallelizable forward pass. Our Sinkhorn-guided sparsification bridges continuous representations with exact integer programming, significantly reducing the search space to achieve highly competitive zero-shot generalization. We achieve a 2.73\% optimality gap on standard CVRP100 benchmarks without extensive fine-tuning, suggesting that task-specific tailoring could readily push state-of-the-art results. Furthermore, the framework scales remarkably well: solving $N=1000$ instances in under 30 seconds with a $<4\%$ gap, demonstrating clear viability for massive problems ($N \ge 2000$). 
Impressively, our single-layer model retains highly competitive zero-shot performance at this scale (achieving a $\approx 5\%$ gap), challenging the prevailing paradigm that \ac{NCO} inherently requires deep, complex architectures.

Despite these successes, ablations reveal a critical phase transition regarding direct supervision. While spatial support and the assignment loss ($\mathcal{L}_{\text{BCE}}$) minimize gaps near the training distribution ($N \le 200$), they inhibit scaling ($N \ge 500$). Models trained without $\Psi$ and $\mathcal{L}_{\text{BCE}}$ excel on larger graphs, suggesting they learn general spatial geometry rather than local topological memorization. 
Although the framework performs well on standard benchmarks (achieving a $2.73\%$ gap on CVRP100), improving its zero-shot scaling to larger instances remains a key priority for future work.
Ultimately, Neural CFRS is the first framework to enforce routing capacity constraints in a differentiable, end-to-end manner, opening a promising pathway toward fully unsupervised \ac{CVRP} methodologies.
 
\bibliography{references}
\bibliographystyle{abbrvnat}

\clearpage
\appendix

\section{Problem Setting}
\label{app:problem}

In this appendix we state the \ac{CVRP} setting used throughout the paper in
full, complementing the compact formulation given in Section~1.

\paragraph{Spatial support and depot.}
We consider the \ac{CVRP} in a setting that reflects the operational
realities of logistics service providers. Let
$\mathcal{X} \subset \mathbb{R}^2$ denote a fixed, finite set of
potential customer locations, which we refer to as the \emph{spatial
support set}. This set models the structured, static geography of a
service area, e.g., the street network of a city, warehouse districts,
or residential zones. A single depot location $x_0 \in \mathcal{X}$
serves as the start and end point for all routes.

\paragraph{Instances and demand.}
Each problem instance is defined by a set of active customers
$C \subseteq \mathcal{X} \setminus \{x_0\}$, where each customer
$x_i \in C$ has a demand $d_i \in \mathbb{R}_{>0}$. A fleet of $K$
vehicles, each with uniform capacity $Q$, must be routed from the depot
to satisfy all demands while minimizing the total cost, typically
defined as the sum of Euclidean distances traveled.

\paragraph{Stochastic daily demand.}
We model the \emph{stochastic nature of daily demand} via a distribution
$\mathbb{P}_{\text{dem}}$ over finite subsets $C \subseteq \mathcal{X} \setminus \{x_0\}$
and their associated demand vectors $\mathbf{d} = (d_i)_{x_i \in C}$.
That is, each day, a problem instance is drawn as
\[
(C, \mathbf{d}) \sim \mathbb{P}_{\text{dem}}.
\]
Importantly, while the customer subsets and demand realization
$(C, \mathbf{d})$ vary across instances, the spatial support
$\mathcal{X}$ remains fixed. This structure reflects operational regimes
where routing decisions are made repeatedly over the same geographic
area with varying customer demand, e.g., daily parcel delivery in a
city, and is the structural property our learnable spatial-support
embedding (Section~\ref{sec:inputs}) is designed to exploit.

\paragraph{Amortized learning objective.}
In this context, the decision-maker faces a repeated stochastic
optimization problem: on each day $t = 1, 2, \dots$, they must compute
a feasible set of vehicle routes $\pi^{(t)}$ given the realized instance
$(C^{(t)}, \mathbf{d}^{(t)})$, such that all demands are satisfied and
the total cost is minimized. Rather than solving each day's instance
from scratch, our aim is to learn a parametrized one-shot policy
$\pi_\theta$ that amortizes the solution cost over
$\mathbb{P}_{\text{dem}}$ by minimizing the expected routing cost
\[
\theta^\ast \;=\; \arg\min_{\theta}\;
\mathbb{E}_{(C,\mathbf{d}) \sim \mathbb{P}_{\text{dem}}}
\bigl[\,\mathrm{cost}\bigl(\pi_\theta(C,\mathbf{d})\bigr)\,\bigr],
\]
subject to the usual CVRP feasibility constraints: every customer in
$C$ is visited exactly once, each route starts and ends at the depot
$x_0$, and the total demand assigned to any vehicle does not exceed its
capacity $Q$.
Rather than optimizing the expected cost directly, we minimize a surrogate supervised learning objective (cross entropy) derived from a dataset of high-quality expert solutions. 
By training the network to strictly imitate these expert assignments, the model implicitly learns to minimize the underlying Euclidean routing cost.
A learned policy that generalizes across realizations from $\mathbb{P}_{\text{dem}}$ is precisely what our method delivers.

\section{Spatial Masked Autoencoder Pre-training Details}
\label{app:smae}
Our \ac{SMAE} consists of a standard Transformer encoder \citep{Vaswani2017AttentionNeed} and a masking strategy inspired by BERT \citep{Devlin2019BERT:Understanding}, applied over the fixed spatial support $\mathcal{X}$.

\paragraph{Masking Strategy.} To encourage the model to learn spatial dependencies, we employ a spatial block-masking strategy that obscures between $15\%$ and $30\%$ of the total nodes.
To form these contiguous patches $\mathcal{M} \subset \mathcal{X}$, we iteratively select a random seed node alongside its $5$ to $10$ nearest neighbors, replacing the entire subset with a shared, learnable \texttt{[MASK]} token.

\paragraph{Joint Optimization Objective.}
For the masked tokens, their corresponding output representations $\Psi[i]$ (where $i \in \mathcal{M}$) are projected through two standard two-layer \ac{MLP} heads. 
To prevent trivial identity mappings, we strictly decouple the receptive field from the supervisory signal by forcing the model to reconstruct a broader neighborhood $k_{\text{target}}$ (where $k_{\text{target}} > k_{\text{attn}}$).
\begin{itemize}
\item \textbf{Distance Head:} The first \ac{MLP} predicts a dense continuous distance matrix $\hat{\mathcal{D}} \in \mathbb{R}^{|\mathcal{M}| \times |\mathcal{X}|}$, which is optimized via an exact $L_1$ loss against the ground-truth Euclidean distances.
\item \textbf{Connectivity Head:} Concurrently, the second \ac{MLP} outputs logits to reconstruct the $k_{\text{target}}$-NN connectivity, optimized via a \ac{BCE} loss.
\end{itemize}
By jointly optimizing these heads, the final embeddings explicitly encode both continuous distances and the local topology.

\paragraph{Pre-training Details.} 
The \ac{SMAE} is implemented as a standard Transformer encoder consisting of 6 layers and 4 attention heads, with a hidden dimension of 128 and a feedforward network dimension of 512. 
We establish the local topology by setting the attention neighborhood to $k_{\text{attn}}=20$, while the decoupled reconstruction target is set to $k_{\text{target}}=40$. 
The model is trained for $10,000$ epochs using the Adam optimizer with an initial learning rate of $1\times 10^{-3}$, which is decayed linearly to zero over the total training iterations. 
To ensure robust representation learning, we use a batch size of 8, meaning each forward pass evaluates 8 independently masked realizations of the exact same fixed spatial support. 
Because the architecture operates exclusively on this static global graph, pre-training is highly efficient, converging in less than 30 minutes on a server equipped with NVIDIA L40S GPUs and Intel CPUs. 
Once pre-training is complete, the $\Psi$, the frozen base embeddings can be seamlessly adapted to any downstream routing architecture that operates on this spatial support via a simple linear projection to the required model dimension.

\input{proof}

\section{Capacity Aware Greedy Decoding}
\label{app:cagd}

As introduced in Phase 1 (Section \ref{sec:seed_generation}), the primary objective of the Capacity Aware Greedy Decoding strategy is to dynamically select a discrete set of $K$ initial cluster centers (seeds) from the candidate customer pool. Rather than selecting seeds purely based on their individual probabilities, this algorithm enforces a spatial dispersion prior while strictly respecting the vehicle capacity $Q$. 

To achieve this, we project the node representations $\mathbf{H}$ through two distinct task heads: the seed probability head $\phi_{\text{seed}}$ yields the marginal probabilities $\mathbf{p}$, and the contrastive head $\phi_{\text{con}}$ yields the latent contrastive representations $\mathbf{Z}$. During each of the $K$ iterations, the algorithm instantiates a new seed by selecting the unassigned node with the highest probability $p_i$. To prevent subsequent seeds from clustering in the same local region, we simulate filling the current seed's vehicle by greedily "packing" the most similar unassigned nodes. This similarity is evaluated using the cosine distance in the contrastive space $\mathbf{Z}$. Nodes are packed until the capacity limit $Q$ is exhausted, at which point they are removed from the candidate pool. 
This ensures that the next selected seed will be geographically distinct. 
In instances where the unassigned candidate pool is exhausted before the sequence is complete—such as when evaluating inflated fleet sizes or when processing in batch mode—the generation loop is terminated early, and the remaining required seed positions are padded to maintain static tensor dimensions for parallel inference.
Algorithm \ref{alg:cagd} details this procedure.

\begin{algorithm}[htbp]
\caption{Capacity Aware Greedy Decoding}
\label{alg:cagd}
\begin{algorithmic}[1]
\Input Node features $\mathbf{H} \in \mathbb{R}^{N \times d}$, demands $\mathbf{d} \in \mathbb{R}^N$, vehicle capacity $Q$.
\Output Set of discrete seed indices $\mathcal{S}$.

\State Compute $K = \lceil \sum_{i=1}^N d_i / Q \rceil$
\State Compute seed probabilities $\mathbf{p}$, where $p_i = \sigma(\phi_{\text{seed}}(\mathbf{h}_i))$
\State Compute contrastive embeddings $\mathbf{Z}$, where $\mathbf{z}_i = \phi_{\text{con}}(\mathbf{h}_i)$
\State Initialize unassigned nodes pool: $\mathcal{U} \gets \{1, \dots, N\}$
\State Initialize final seed set: $\mathcal{S} \gets \emptyset$
\State Compute pairwise cosine similarity matrix $\mathbf{G}$ in contrastive space, where $G_{i, j} = \frac{\mathbf{z}_i^\top \mathbf{z}_j}{\|\mathbf{z}_i\|_2 \|\mathbf{z}_j\|_2}$

\For{$k = 1$ \textbf{to} $K$}
    \If{$\mathcal{U} = \emptyset$}
        \State \textbf{break} \Comment{Pad remaining sequence to maintain static tensor dimensions}
    \EndIf
    \State Select most confident available seed: $i^* \gets \arg\max_{i \in \mathcal{U}} p_i$
    \State $\mathcal{S} \gets \mathcal{S} \cup \{i^*\}$
    \State Initialize capacity accumulator: $C \gets d_{i^*}$
    \State $\mathcal{U} \gets \mathcal{U} \setminus \{i^*\}$ \Comment{Remove seed from candidate pool}
    
    \State Let $\mathcal{J}$ be the sequence of remaining nodes $j \in \mathcal{U}$, sorted descending by $G_{i^*, j}$
  
    \For{\textbf{each} $j \in \mathcal{J}$}
        \If{$C + d_j \leq Q$}
            \State $C \gets C + d_j$
            \State $\mathcal{U} \gets \mathcal{U} \setminus \{j\}$ \Comment{Suppress node from future consideration}
        \EndIf
    \EndFor
\EndFor

\State \Return $\mathcal{S}$
\end{algorithmic}
\end{algorithm}
\FloatBarrier

\section{Training and Testing Setup}
\label{app:training-setup}

\paragraph{Hyperparameter Tuning} Guided by \citep{Iyer2023Wide-minimaSchedule}, which posits that sustaining a high \ac{LR} encourages convergence to broad, generalizable minima, we sought to identify the maximum stable \ac{LR} and batch size. 
We conducted a grid search over \acp{LR} $\in [0.001, 0.05]$ and batch sizes $\in \{64, 128, 256, 512\}$. 
Peak validation performance was achieved using the AdamW optimizer \citep{Loshchilov2018FixingAdam} with an \ac{LR} of $0.006$, a batch size of 256 and a dropout rate of $0.1$. All models were trained for 30 epochs on a server equipped with NVIDIA L40S GPUs and 32 core Intel 8562Y+ CPUs.

\paragraph{Model Details}
\begin{enumerate}
    \item \textbf{Phase 1 and 2.} Both \ac{ST} and \ac{CT} are standard Transformer encoders, with 8 heads and 6 layers, a hidden dimension of 128 (feedforward 512), $k_{attn}=20$, $\text{dropout}=0.1$. $\mathcal{L}_\text{con}$ has a scalar weight of $0.05$, and all the other losses are not weighted.
    \item \textbf{Phase 2.} The \ac{OT} layer is configured with $\epsilon=0.01$ and with a maximum iteration of 20. We use $\epsilon=0.001$ and with a maximum iteration of 1000 for inference but found that this did not have a significant impact on performance. The full loss used is
    $\mathcal{L}_{\text{total}} = \big( \mathcal{L}_{\text{seed}} + 0.05\times\mathcal{L}_{\text{con}} \big) + \big(\mathcal{L}_{\text{BCE}} + \mathcal{L}_{\text{OT}}\big)$.
    \item \textbf{Phase 3.} For the \ac{OR} solver, we use Gurobi version $12.0.3$ \citep{GurobiOptimizationLLC2026GurobiManual}, parallelized across 16 CPU cores per batch. To ensure standardized evaluation across all decoding variants, Gurobi is configured with a strict time limit of 100s and a target MIPGap of 0.001. Furthermore, we define the confidence threshold for hard assignment decoding as $\tau_{high} = 0.99$, and the continuous transport plan probability threshold for Sinkhorn-guided sparsification as $\tau_{low} = 10^{-4}$. For the \ac{TSP} solver, we use \ac{LKH-3} \citep{Helsgaun2017AnProblems}.
    \item \textbf{Weight Saving Strategy.} Utilizing the built-in PyTorch \citep{Paszke2019Pytorch:Library} framework for Stochastic Weight Averaging \citep{Athiwaratkun2018ThereAverage}, we initiate the weight averaging process at the midpoint of training (the $50\%$ epoch) and systematically save the model states at each subsequent epoch.
\end{enumerate}

\paragraph{Note on Measuring Inference Time}
To properly contextualize the computational metrics reported throughout our results, we first clarify the composition of our inference pipeline. 
The end-to-end evaluation consists of three distinct stages: (1) the neural network forward pass, (2) the exact Gurobi solver, and (3) parallel TSP recovery. 
Because the first and third stages execute with negligible latency, the total inference time is overwhelmingly dominated by the exact solver, rendering the overall framework fundamentally CPU-bound.

The total inference times reported in our tabular results denote the aggregate parallel wall-clock time required to evaluate an entire dataset at a given scale. 
For example, our large problem size evaluation pipeline processes 128 instances distributed across 16 physical CPU cores, effectively collapsing the workload into 8 sequential execution batches.
Consequently, naively dividing the aggregate tabular time by the total number of instances (128) yields an artificially deflated per-instance metric.
To derive the true average sequential processing time for a single instance, the total wall-clock time must instead be divided by the number of sequential batches (8).
This distinction accounts for the exact scaling metrics illustrated in Figure 3, which plots the average total inference time per instance.

\section{Isolating Representational Power: Results on Single-Layer Model}
\label{app:one-layer}

In standard \ac{NCO} architectures, deep Transformer encoders (6 layers or more) are deemed necessary to allow nodes to iteratively aggregate spatial information and construct a global geometric understanding of the routing instance.
To isolate the representational power of $\Psi$, our learned embeddings, from the model capacity of Transformer encoders, we evaluate Neural \ac{CFRS} under an extreme bottleneck: restricting both the \ac{ST} and \ac{CT} to a single attention layer. 
This reduces the total model footprint to a mere 682K parameters. As standard capacity settings introduces severe out-of-distribution distribution shifts for 1-layer models, our analysis here focuses exclusively on the constant capacity operational setting ($Q=50$), which most closely mirrors the homogeneous fleet configurations of real-world logistics networks.

\paragraph{The Surprising Performance of Neural CFRS} 
A notable finding from this ablation is the surprising efficacy of the core Neural \ac{CFRS} pipeline, even when raw coordinates and one layer is used. 
As demonstrated in Table \ref{tab:one-layer}, the 1-layer model operating purely on raw $(x,y)$ and without $\mathcal{L}_{\text{BCE}}$ achieves optimality gaps of approximately $7.42\%$ on $N=500$ and $7.31\%$ on $N=1000$ under hard assignment decoding.
In the context of large-scale \ac{NCO}, achieving gaps in the $7\%$ to $8\%$ range on large instances using an ultra-lightweight, \ac{NAR} model is highly competitive.
This indicates that the fundamental \ac{CFRS} formulation—framing the problem as a differentiable \ac{OT} assignment and sparsifying an exact solver—is intrinsically sound.
The network does not strictly require deep computational graphs to yield viable, capacity-compliant clusters.

\paragraph{Elevating Performance via the Spatial Support Prior.} 
While the raw coordinates without $\mathcal{L}_{\text{BCE}}$ baseline demonstrates the robustness of the \ac{OT} formulation, introducing the pre-trained spatial support embeddings ($\Psi$) significantly elevates generalization performance. By leveraging $\Psi$, the 1-layer model compresses the optimality gaps down to $5.58\%$ for $N=500$ and $5.08\%$ for $N=1000$. 
This significant performance delta empirically validates the core hypothesis of the Neural \ac{CFRS} framework: pre-training of \ac{SMAE} successfully offloads the burden of geometric representation learning as the $\Psi$ embeddings capture the global $k$-NN topology and the relative distance to every node in the fixed service area, which enables the downstream routing model to focus on learning the clusters.

\paragraph{Implications for Parameter Efficiency.}
These results challenge the prevailing \ac{NCO} paradigm that deeper sequence models are inherently necessary for large-scale routing. By exploiting a fixed spatial support under a constant capacity regime, Neural \ac{CFRS} proves that injecting an $E(2)$-invariant topological prior enables extreme parameter efficiency. 
This positions the framework as uniquely suited for amortized, latency-sensitive environments, where executing a 682K-parameter, one-shot assignment model over a pre-computed spatial vocabulary offers immense computational advantages without sacrificing routing quality.

In summary, our ablation studies on the single-layer models reveal a clear trade-off: incorporating $\mathcal{L}_{BCE}$ is essential for maximizing performance near the training distribution, whereas omitting it yields superior generalization.

\begin{table}[ht]
\centering
\caption{Ablation study isolating representational power by restricting the architecture to a single \ac{ST} layer and a single \ac{CT} layer. Models leveraging the spatial support prior ($\Psi$) consistently outperform baselines utilizing raw coordinates $(x, y)$. This performance gap empirically confirms that the pre-trained embeddings successfully offload the burden of geometric representation learning, enabling extreme parameter efficiency.}
\label{tab:one-layer}
\resizebox{\textwidth}{!}{%
\begin{tabular}{c c c l rr rr rr rr}
\toprule
\multirow{2}{*}{\textbf{Setup}} & \multirow{2}{*}{\textbf{Input}} & \multirow{2}{*}{$\mathcal{L}_{\text{BCE}}$} & \multirow{2}{*}{\textbf{Decoding}} & \multicolumn{2}{c}{CVRP100} & \multicolumn{2}{c}{CVRP200} & \multicolumn{2}{c}{CVRP500} & \multicolumn{2}{c}{CVRP1000} \\
\cmidrule(lr){5-6} \cmidrule(lr){7-8} \cmidrule(lr){9-10} \cmidrule(lr){11-12}
& & & & Gap & Time(s) & Gap & Time(s) & Gap & Time(s) & Gap & Time(s) \\
\midrule
\multirow{12}{*}{\rotatebox{90}{\textbf{Constant Capacity}}} & \multirow{6}{*}{$\Psi$} & \multirow{3}{*}{$\checkmark$} & Exact & \textbf{4.820}\% & 96.62 & 5.545\% & 6.53 & 6.792\% & 28.22 & 6.492\% & 246.15 \\
 & & & Sparsified & 4.844\% & 92.29 & \textbf{5.395}\% & 6.34 & 6.745\% & 42.99 & 9.769\% & 410.14 \\
 & & & Hard Assign. & 5.681\% & 90.10 & 5.733\% & 6.16 & 5.946\% & 76.00 & 5.315\% & 262.98 \\
\cmidrule(lr){3-12}
 & & \multirow{3}{*}{$\times$} & Exact & 5.215\% & 78.88 & 6.040\% & 5.07 & 6.595\% & 15.27 & 6.517\% & 62.88 \\
 & & & Sparsified & 5.338\% & 76.24 & 5.969\% & 5.18 & 6.995\% & 45.49 & 15.163\% & 535.94 \\
 & & & Hard Assign. & 6.015\% & 73.47 & 6.115\% & 4.73 & \textbf{5.579}\% & 40.33 & \textbf{5.083}\% & 318.25 \\
\cmidrule(lr){2-12}
 & \multirow{6}{*}{$(x, y)$} & \multirow{3}{*}{$\checkmark$} & Exact & 5.372\% & 74.93 & 6.433\% & 4.69 & 9.396\% & 18.63 & 9.610\% & 128.83 \\
 & & & Sparsified & 5.399\% & 75.27 & 6.186\% & 4.67 & 9.243\% & 30.26 & 9.653\% & 204.91 \\
 & & & Hard Assign. & 6.177\% & 74.01 & 6.337\% & 4.90 & 8.154\% & 122.86 & 7.991\% & 133.46 \\
\cmidrule(lr){3-12}
 & & \multirow{3}{*}{$\times$} & Exact & 5.683\% & 77.38 & 6.437\% & 5.72 & 9.008\% & 26.56 & 8.916\% & 127.88 \\
 & & & Sparsified & 5.618\% & 72.76 & 7.431\% & 5.10 & 9.154\% & 29.57 & 17.220\% & 392.29 \\
 & & & Hard Assign. & 6.316\% & 69.91 & 6.565\% & 3.95 & 7.417\% & 60.51 & 7.314\% & 157.19 \\
\midrule
\multirow{12}{*}{\rotatebox{90}{\textbf{Standard Capacity}}} & \multirow{6}{*}{$\Psi$} & \multirow{3}{*}{$\checkmark$} & Exact & 4.845\% & 96.13 & 7.413\% & 5.39 & 12.113\% & 8.16 & 24.953\% & 25.67 \\
 & & & Sparsified & \textbf{4.838}\% & 93.38 & 7.038\% & 5.49 & 12.581\% & 9.00 & 29.701\% & 27.37 \\
 & & & Hard Assign. & 5.682\% & 91.02 & 6.589\% & 5.18 & 9.950\% & 19.99 & 20.525\% & 25.20 \\
\cmidrule(lr){3-12}
 & & \multirow{3}{*}{$\times$} & Exact & 5.205\% & 82.00 & 7.315\% & 4.35 & 12.041\% & 7.31 & 20.845\% & 20.71 \\
 & & & Sparsified & 5.270\% & 75.57 & 7.689\% & 4.03 & 12.511\% & 7.22 & 26.589\% & 21.44 \\
 & & & Hard Assign. & 6.007\% & 73.93 & \textbf{6.477}\% & 3.77 & \textbf{9.352}\% & 9.81 & \textbf{16.728}\% & 18.54 \\
\cmidrule(lr){2-12}
 & \multirow{6}{*}{$(x, y)$} & \multirow{3}{*}{$\checkmark$} & Exact & 5.397\% & 75.47 & 8.239\% & 3.72 & 16.163\% & 5.76 & 31.586\% & 18.75 \\
 & & & Sparsified & 5.366\% & 72.27 & 7.896\% & 3.64 & 16.395\% & 6.92 & 35.809\% & 19.64 \\
 & & & Hard Assign. & 6.207\% & 78.16 & 7.460\% & 4.51 & 12.671\% & 9.18 & 23.193\% & 20.69 \\
\cmidrule(lr){3-12}
 & & \multirow{3}{*}{$\times$} & Exact & 5.684\% & 77.32 & 8.074\% & 4.10 & 16.705\% & 7.28 & 31.542\% & 20.00 \\
 & & & Sparsified & 5.740\% & 72.80 & 7.969\% & 3.73 & 17.038\% & 6.62 & 36.582\% & 20.27 \\
 & & & Hard Assign. & 6.347\% & 70.38 & 7.279\% & 3.53 & 12.730\% & 11.26 & 23.844\% & 17.96 \\
\bottomrule
\end{tabular}
}
\end{table}

\section{Comparison of Neural CFRS with other comparable SOTA Neural Methods}
\label{app:sota-comparison}

\paragraph{Baseline Reproduction Methodology} To ensure a rigorous comparison, baseline results were obtained through a combination of direct citation, execution of official repositories, and custom implementations.
Note that prior work in general report total inference time across all instances and thus we follow that approach.
Performance metrics for LKH-3, HGS, OR-Tools, BQ (greedy), and LEHD (greedy) are directly cited from \citet{Luo2024NeuralGeneralization}. 
For POMO, NCC, and GLOP, we generated results by running inference using the authors' official codebases and provided pre-trained models. 
Specifically, we utilized the POMO model trained on size-100 instances via \ac{RL}, and the NCC model trained on size-200 instances (reproduction details for NCC and GLOP are discussed below). 
Because neither code nor pre-trained weights were publicly available for TAM, we report the results directly from their original publication, omitting the $N=200$ and $N=500$ scales as they were not provided. 
We utilize the Petal algorithm implementation provided by \citet{Rasku2019Meta-surveyResults}, though it should be noted that their solver stalled at $N=1000$.
Finally, Sweep-LKH3 and Fisher-Jaikumar-LKH3 were evaluated using our own custom implementations.

\paragraph{Running Inference on NCC}
We utilized the official implementation provided by the authors, loading the model weights from the checkpoint specified in their evaluation scripts (\texttt{epoch=117\_val\_acc=0.9592.ckpt}). As NCC inference is executed serially, processing times range from approximately 2 seconds per instance for 100-node problems to 30 seconds for 1000-node problems. 
To ensure computational tractability, we limited our evaluation of their method to 128 samples per problem size. 
We anticipate that this reduction does not significantly impact the overall performance trends for CVRP100. 
All hyperparameters follow the \texttt{ckmeans\_greedy} configuration from their codebase.

\paragraph{Running Inference on GLOP} For the GLOP \citep{Ye2024Glop:Real-time} baseline, the original authors provided pretrained weights exclusively for large-scale instances (CVRP1000 and CVRP2000). 
To ensure a comprehensive and fair comparison, we trained a dedicated GLOP partitioner model from scratch on CVRP100 instances. 
In the official implementation, the sparse neighborhood graph size ($\texttt{K\_SPARSE}$) is set to $10\%$ of the total node count (e.g., $k=100$ for $N=1000$). 
To maintain consistency, we adapted the sparsity parameter accordingly, setting $\texttt{K\_SPARSE}=10$ for our CVRP-100 training.

We trained the partitioner for 30 epochs using the authors' prescribed hyperparameters. The entire training process required approximately 4 hours, which exceeded our standard baseline training budget of 3 hours. 
The original code monitors the validation objective throughout the process and we observed that the model achieved its best validation objective at epoch 20; therefore, we utilized this specific checkpoint for all subsequent evaluations.

During inference, we coupled our trained CVRP100 partitioner with the authors' default stage-two neural reviser, which operates on sub-problems of size 20. 
We utilized greedy decoding with 5 revision iterations, consistent with the standard configuration.
To obtain our final results, we evaluated this single model trained on CVRP100 across multiple problem scales ($N=100, 200, 500$, and $1000$) on the LEHD \citep{Luo2024NeuralGeneralization} benchmark datasets. 
We found that maintaining the $10\%$ nearest-neighbor sparsity ratio during inference (e.g., dynamically setting $\texttt{K\_SPARSE}=50$ for $N=500$ and $100$ for $N=1000$) yielded the strongest overall generalization performance for GLOP across all evaluated scales.
Table \ref{tab:sota-comparison} shows that GLOP has poor performance on small problem sizes.

\paragraph{Discrepancies in the LEHD Dataset}
We observe a notable performance discrepancy when evaluating the Sweep heuristic \citep{Gillett1974AProblem}: it yields surprisingly strong results on the dataset provided by \citet{Luo2024NeuralGeneralization}, yet performs poorly on instances where $N \in \{100, 200, 500\}$. To investigate the root cause of this anomaly, we generated 1,000 independent test instances. Unlike the dataset in \citet{Luo2024NeuralGeneralization}, where depots are randomly positioned, our instances enforce a strict spatial support with a centrally located depot. We then established strong near-optimal baselines for these instances using \ac{HGS} \citep{Vidal2022HybridNeighborhood}. To ensure consistent evaluation, all instances were executed in parallel without CPU contention (allocating one physical core per process) and run for the recommended \ac{HGS} time limit of $T = 240 \times (N/100)$ seconds. 

Table \ref{tab:lehd-discrepancy} details the resulting optimality gaps for the Sweep algorithm against these new baselines. While structural differences in the problem geometries might partially explain the variation, we hypothesize that the baseline solutions provided in the original CVRP1000 dataset—which were generated using LKH-3—may simply be highly suboptimal, thereby artificially inflating the apparent performance of the Sweep algorithm. Fully verifying and resolving this discrepancy remains an important direction for future research.

\paragraph{Discussion of Results}
Neural \ac{CFRS} (exact) achieves the most competitive average optimality gap of $2.732\%$. 
Notably, there are 174 individual instances where Neural \ac{CFRS} outperforms the \ac{LKH-3} baseline. 
Figure \ref{fig:optimality_gaps_histogram_lehd} illustrates the distribution of these optimality gaps, while Figure \ref{fig:n100_lehd_best} visualizes the routing configurations for the two best-performing instances.

Furthermore, the Fisher-Jaikumar-LKH3 heuristic demonstrates remarkably strong performance, maintaining optimality gaps of approximately $6\%$ across the CVRP100, 200, and 500 datasets.
Excluding TAM-LKH3, Fisher-Jaikumar-LKH3 emerges as the highest-performing method among the evaluated models for CVRP1000. 
While this relative success might be partially attributed to weaker baseline configurations, it underscores the enduring efficacy of a classical algorithm developed nearly four decades ago.
Ultimately, these results highlight the validity of the clustering-based approach, suggesting that further research into modern, neural \ac{CFRS} methodologies is highly warranted.

\begin{figure}[ht]
    \centering
    \includegraphics[width=0.80\textwidth]{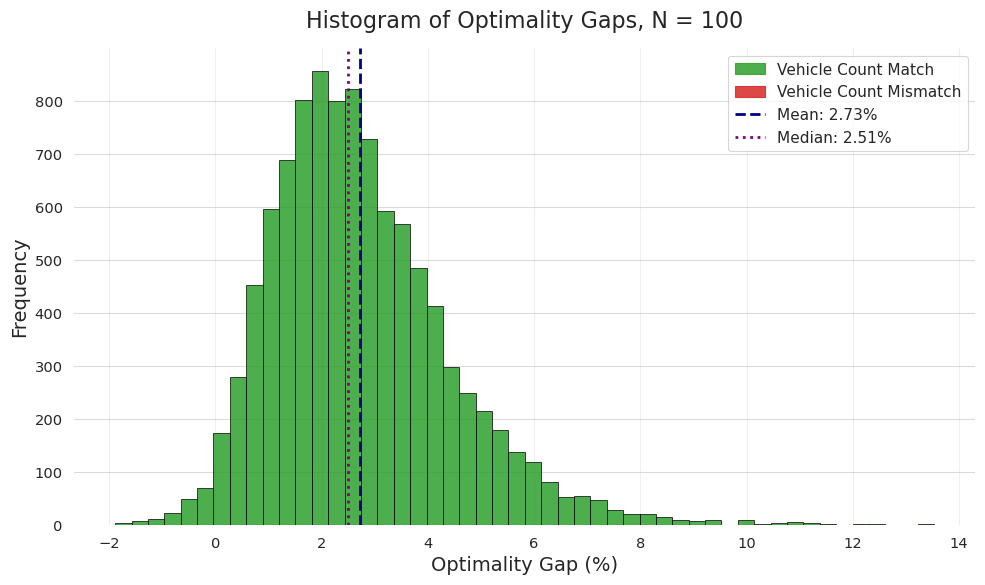}
    \caption{Histogram detailing the distribution of optimality gaps for Neural CFRS (exact) on the $N=100$ dataset. The distribution highlights the 174 instances (left of $0\%$ gap) where the proposed method successfully outperformed the LKH3 baseline. Notably, for all these instances, $K_{\text{min}}$ accurately matches the ground truth $K$.}
    \label{fig:optimality_gaps_histogram_lehd}
\end{figure}

\begin{figure}[ht]
    \centering
    \includegraphics[width=\textwidth]{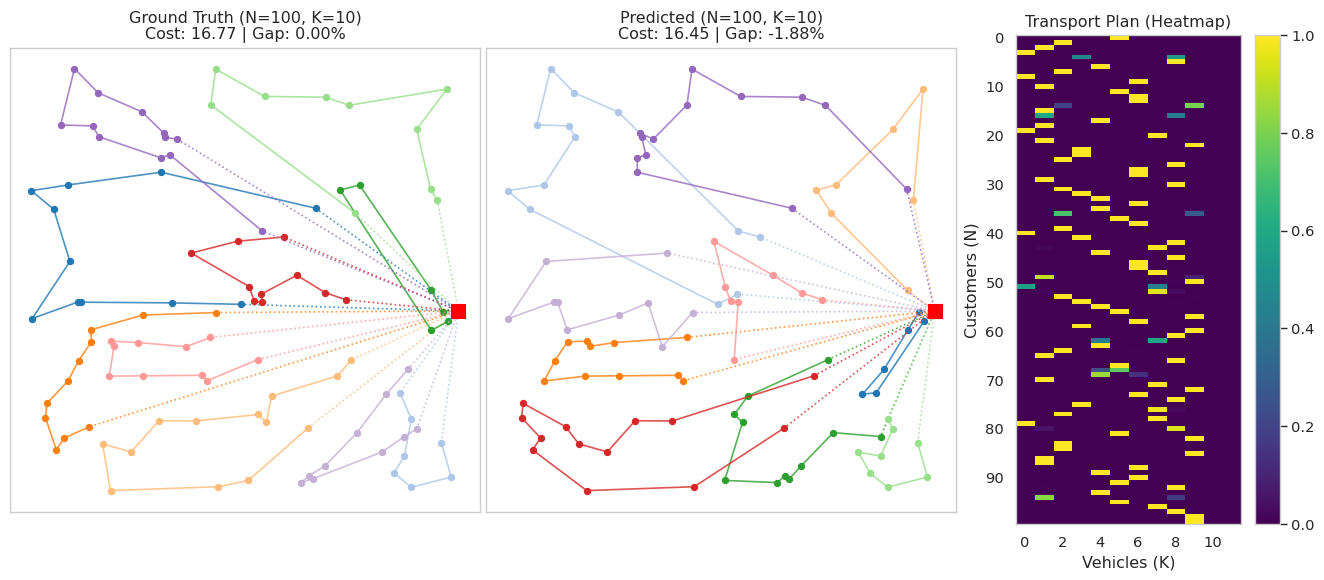}
    \includegraphics[width=\textwidth]{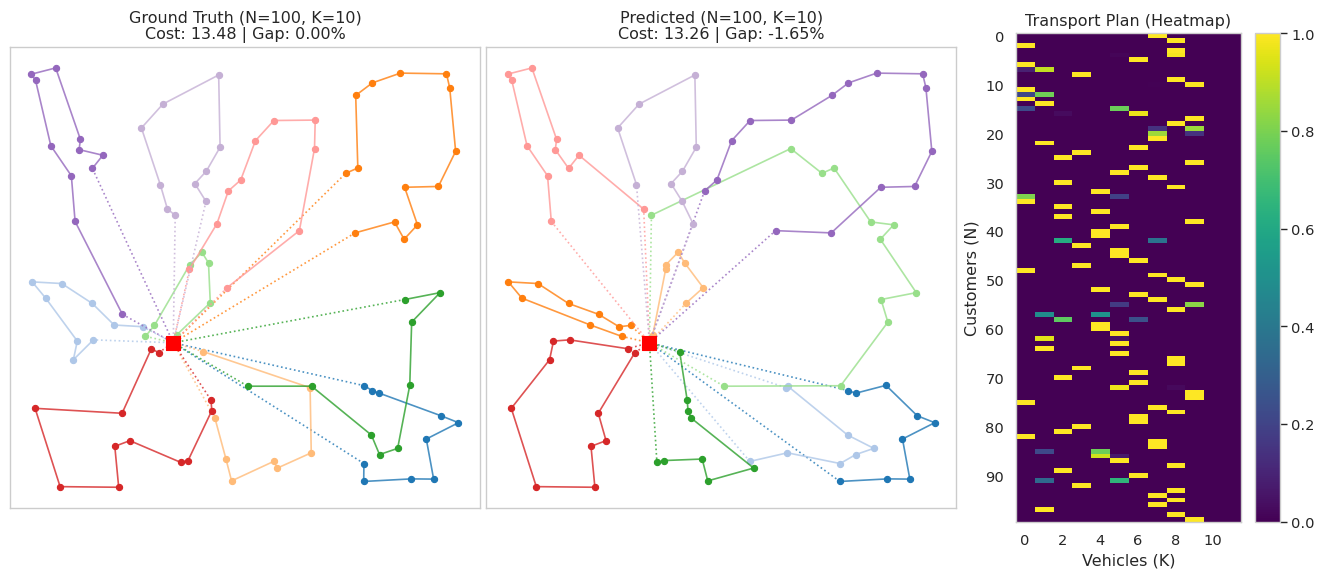}
    \caption{Visual comparison of the baseline \ac{LKH-3} solutions and the Neural \ac{CFRS} (exact) predictions for the two $N=100$ instances that yielded the most significant performance improvements. In these specific cases, Neural \ac{CFRS} outperforms the baseline by $1.88\%$ and $1.65\%$, respectively. The corresponding transport plans show a generally high level of confidence in the assignments, with many of the probabilities close to $1.0$.}
    \label{fig:n100_lehd_best}
\end{figure}

\begin{table}[ht]
\centering
\caption{Performance comparison of various \ac{CVRP} solvers across different problem sizes ($N=100, 200, 500, 1000$). The table reports the average optimality gap (\%) relative to the LKH3 baseline and the total computation time. The evaluated methods are grouped into classical heuristics, neural \ac{AR} approaches, clustering-based (neural and classical) algorithms, and our proposed Neural \ac{CFRS} variants. Bold values indicate the lowest optimality gap among the learning and heuristic-based solvers for each dataset size, while dashes (-) denote instances where a solver failed to complete or the data was unavailable.}
\label{tab:sota-comparison}
\resizebox{\textwidth}{!}{ 
\begin{tabular}{ll rr rr rr rr}
\toprule
\multicolumn{2}{c}{\multirow{2}{*}{\textbf{Method}}} & \multicolumn{2}{c}{CVRP100} & \multicolumn{2}{c}{CVRP200} & \multicolumn{2}{c}{CVRP500} & \multicolumn{2}{c}{CVRP1000} \\
\cmidrule(lr){3-4} \cmidrule(lr){5-6} \cmidrule(lr){7-8} \cmidrule(lr){9-10}
\multicolumn{2}{c}{} & Gap & Time & Gap & Time & Gap & Time & Gap & Time \\
\midrule
\multicolumn{2}{l}{LKH3} & 0.000\% & 12h & 0.000\% & 2.1h & 0.000\% & 5.5h & 0.000\% & 7.1h \\
\multicolumn{2}{l}{HGS} & -0.533\% & 4.5h & -1.126\% & 1.4h & -1.794\% & 4h & -2.162\% & 5.3h \\
\multicolumn{2}{l}{OR-Tools} & 6.193\% & 2h & 6.894\% & 1h & 9.112\% & 2.2h & 11.662\% & 3h \\
\midrule
\multicolumn{2}{l}{POMO greedy} & 3.162\% & 4.7s & 9.335\% & 0.2s & 34.685\% & 0.5s & 307.928\% & 1.4s \\
\multicolumn{2}{l}{BQ greedy} & 2.993\% & 0.7m & 3.527\% & 4s & 5.121\% & 0.4m & 9.812\% & 2.4m \\
\multicolumn{2}{l}{LEHD greedy} & 3.648\% & 0.5m & \textbf{3.312\%} & 3s & \textbf{3.178\%} & 0.3m & 4.912\% & 1.6m \\
\midrule
\multicolumn{2}{l}{Sweep-LKH3} & 16.097\% & 2.5m & 13.095\% & 2s & 14.150\% & 3.8s & 5.230\% & 8.8s \\
\multicolumn{2}{l}{Petal} & 8.647\% & 31m & 8.214\% & 2.4m & 11.113\% & 26m & - & - \\
\multicolumn{2}{l}{Fisher-Jaikumar-LKH3} & 6.750\% & 18m & 6.454\% & 25s & 6.740\% & 2.8m & \textbf{3.749\%} & 1.3m \\
\multicolumn{2}{l}{Neural Cap. Cluster$^*$} & 10.088\% & 5.2m & 11.545\% & 11m & 11.267\% & 31m & 20.025\% & 1.1h \\
\multicolumn{2}{l}{TAM-LKH3} & 3.209\% & 0.86s & - & - & - & - & \textbf{-0.215\%} & 6.2s \\
\multicolumn{2}{l}{GLOP-LKH3} & 22.70\% & 16m & 18.78\% & 14s & 17.57\% & 19s & 31.10\% & 27s \\
\midrule
\multirow{3}{*}{Neural \ac{CFRS}} & Exact & \textbf{2.732\%} & 1.4m & 6.247\% & 4.0s & 9.787\% & 6.8s & 30.102\% & 21s \\
 & Sparsified & 3.753\% & 1.2m & 5.964\% & 3.9s & 9.604\% & 7.7s & 31.219\% & 21s \\
 & Hard Assign. & 3.858\% & 1.2m & 5.603\% & 4.3s & 7.827\% & 8.4s & 27.868\% & 20s \\
\bottomrule
\multicolumn{9}{l}{\footnotesize * represents 128 instances used for all problem sizes. Dashes indicate instances where a baseline failed to complete or data was unavailable.}
\end{tabular}
}
\end{table}

\begin{table}[ht]
  \centering
  \caption{Performance and inference time (in seconds) of the Sweep algorithm evaluated on our independently generated instances with centrally located depots, solved to near-optimality with \ac{HGS} with the recommended time limit of $T = 240 \times (N/100)$ seconds. The consistently large optimality gaps observed across all scales suggest that the strong Sweep performance on the original LEHD CVRP1000 dataset may be an artifact of weak baseline solutions. (*) indicates that the method was evaluated using only 128 samples for CVRP100 due to slow run time.}
  \label{tab:lehd-discrepancy}
  \begin{tabular}{l cc @{\hspace{1.5em}} cc @{\hspace{1.5em}} cc @{\hspace{1.5em}} cc}
    \toprule
    & \multicolumn{2}{c}{CVRP100} & \multicolumn{2}{c}{CVRP200} & \multicolumn{2}{c}{CVRP500} & \multicolumn{2}{c}{CVRP1000} \\
    \midrule
    Sweep & 10.873\% & 15s & 14.119\% & 27s & 20.592\% & 56s & 24.936\% & 54s \\
    \bottomrule
  \end{tabular}
\end{table}

\section{Ablations}
\label{app:ablations}

\paragraph{Ablations Setup}
To properly contextualize the ablations presented in Table~\ref{tab:ablations}, it is necessary to explicitly distinguish the underlying datasets being evaluated. 
Part of Table~\ref{tab:main_results} reports the "Standard Benchmark Setting" by training and testing the model with the data provided by LEHD \citep{Luo2024NeuralGeneralization}, which generates instances with i.i.d. random coordinates under the standard variable capacity setting of $Q \in \{50, 80, 100, 250\}$ for the respective problem sizes.
However, because our spatial support prior ($\Psi$) requires a fixed geographic graph (as detailed in Section~\ref{sec:inputs}), it cannot be applied to i.i.d. instances. 
Therefore, the "Standard Capacity" ablations in Table 4 strictly evaluate our independently generated fixed spatial support dataset, but subject it to the variable vehicle capacity constraints characteristic of standard benchmarks.
This isolates the architectural impact of $\Psi$ and the assignment loss without confounding the results with spatial distribution shifts.

\paragraph{Apparent Phase Transition for $\Psi$ and $\mathcal{L}_{\text{BCE}}$.} Table \ref{tab:ablations} isolates the impact of the spatial support embeddings ($\Psi$ vs. $(x,y)$) and the assignment loss ($\mathcal{L}_{\text{BCE}}$). We observe an intriguing phase transition: for smaller capacities and graph sizes ($N \le 200$), the spatial prior $\Psi$ anchored with $\mathcal{L}_{\text{BCE}}$ yields the tightest optimality gap (3.03\% for exact). However, under zero-shot generalization to large graphs ($N \ge 500$), discarding $\Psi$ and  $\mathcal{L}_{\text{BCE}}$ loss consistently yields superior generalization (e.g., 3.26\% for hard assignment without $\Psi$ and $\mathcal{L}_{\text{BCE}}$). It is important to note here that this applies both to the constant capacity setting and the standard capacity setting.
We hypothesize that forcing the model to assign nodes to specific seeds causes it to overfit to underlying topological biases, essentially inducing rote memorization. While typically viewed as detrimental, this behavior is actually advantageous in real-world operations where geographic layouts remain static. 
Conversely, without $\Psi$ and $\mathcal{L}_{\text{BCE}}$, the model relies solely on general coordinates and \ac{OT} gradients, prompting it to learn the broader concept of geometry and enforcing capacity constraints rather than relying on spatial memory.
In summary, our ablation studies reveal a clear trade-off: while the spatial support prior ($\Psi$) and assignment loss ($\mathcal{L}_{BCE}$) must be coupled for optimal performance on instances close to the training distribution, utilizing raw coordinates without $\mathcal{L}_{BCE}$ yields superior generalization on larger problems.

\paragraph{Solver Behavior and CAP Subproblem} Under the constant capacity setting for the large-scale $N=1000$ instances detailed in Table~\ref{tab:ablations}, a distinct computational trend emerges regarding the solver execution times. 
With the notable exception of the model utilizing raw coordinates $(x,y)$ combined with the assignment loss $\mathcal{L}_{BCE}$, the average run time consistently increases as the decoding strategy shifts from Exact to Sparsified, and peaks under Hard Assignment, which is counter intuitive as the CAP subproblem in Hard Assignment is actually the smallest.
This progression reveals an underlying mathematical structure within the learned cost matrix $\Delta$. 
Under Exact decoding, the exact solver consistently converges and terminates well before reaching the time limit. 
This early saturation implies that relying solely on the raw values of $\Delta$ limits the achievable optimality gap. 
Under Sparsified decoding, solve times increase, but the resulting performance gains are highly variable and inconclusive, suggesting that merely pruning the edge space does not uniformly simplify the branch-and-bound search.
Conversely, Hard Assignment introduces a counter-intuitive phenomenon: despite aggressively fixing decision variables and strictly reducing the dimensional size of the CAP subproblem (solved exactly), the exact solver runs for a significantly longer duration but ultimately achieves a substantially smaller optimality gap. 
This demonstrates that the neural model confidently and accurately isolates the optimal macro-clusters (i.e. "obviously correct" nodes assigned first), shifting the solver's computational burden entirely to resolving the ambiguous, low-confidence boundary assignments. 
We hypothesize that these unassigned boundary nodes represent the core combinatorial bottleneck of the instance, characterized by a dense set of competing integer-feasible assignments. While forcing the branch-and-bound solver to thoroughly evaluate this dense search space increases computational time, explicitly isolating this bottleneck enables the solver to converge to a better routing solution.

\paragraph{Iterative Fallback.}
The aggressiveness of the Hard Assignment strategy is evidenced by a specific anomaly in the solver's overall execution time. To ensure standardized evaluation, Gurobi is strictly constrained to a 100s time limit and a target MIPGap of 0.001 per instance. Because inference is parallelized across 16 CPU cores for the 128 test samples (yielding 8 sequential batches), the theoretical maximum execution time should inherently bound at approximately 800s. However, the Hard Assignment decoding strategy for the model trained without $\mathcal{L}_{BCE}$ significantly exceeded this threshold, recording an inflated average time of 1240s. 
This inflation occurred because the aggressive node-fixing occasionally rendered the initial residual subproblems intractable within the strict 100s limit, which triggered our iterative fallback mechanism (randomly pruning $10\%$ of the hard-assigned nodes and re-solving).
Notably, this was the only configuration across all runs in Table~\ref{tab:ablations} where the fallback was activated.
This anomaly demonstrates the necessity and effectiveness of the fallback mechanism, which enables a graceful recovery.

\begin{table}[ht]
\centering
\caption{We systematically ablate Neural \ac{CFRS}, confirming that the interplay between $\mathcal{L}_{\text{BCE}}$ and the learned spatial prior $\Psi$ is strictly required to attain peak routing efficiency on smaller instances, yet requires relaxation to maximize zero-shot generalization bounds at larger scales.}
\label{tab:ablations}
\resizebox{\textwidth}{!}{%
\begin{tabular}{c c c l rr rr rr rr}
\toprule
\multirow{2}{*}{\textbf{Setup}} & \multirow{2}{*}{\textbf{Input}} & \multirow{2}{*}{$\mathcal{L}_{\text{BCE}}$} & \multirow{2}{*}{\textbf{Decoding}} & \multicolumn{2}{c}{CVRP100} & \multicolumn{2}{c}{CVRP200} & \multicolumn{2}{c}{CVRP500} & \multicolumn{2}{c}{CVRP1000} \\
\cmidrule(lr){5-6} \cmidrule(lr){7-8} \cmidrule(lr){9-10} \cmidrule(lr){11-12}
& & & & Gap & Time(s) & Gap & Time(s) & Gap & Time(s) & Gap & Time(s) \\
\midrule
\multirow{12}{*}{\rotatebox{90}{\textbf{Constant Capacity}}} & \multirow{6}{*}{$\Psi$} & \multirow{3}{*}{$\checkmark$} & Exact & \textbf{3.03}\% & 77.60 & 3.88\% & 5.41 & 4.54\% & 24.83 & 7.63\% & 359.49 \\
 & & & Sparsified & 3.24\% & 73.44 & \textbf{3.79}\% & 5.12 & 4.46\% & 35.21 & 9.56\% & 410.24 \\
 & & & Hard Assign. & 3.76\% & 75.30 & 4.87\% & 3.92 & 4.79\% & 49.27 & 3.77\% & 609.85 \\
\cmidrule(lr){3-12}
 & & \multirow{3}{*}{$\times$} & Exact & 3.16\% & 81.97 & 3.90\% & 5.65 & 4.38\% & 43.91 & 4.77\% & 172.39 \\
 & & & Sparsified & 3.62\% & 73.04 & 4.96\% & 4.45 & 4.43\% & 37.73 & 13.08\% & 501.21 \\
 & & & Hard Assign. & 4.53\% & 68.87 & 4.95\% & 4.02 & 4.50\% & 50.05 & 4.05\% & 745.55 \\
\cmidrule(lr){2-12}
 & \multirow{6}{*}{$(x, y)$} & \multirow{3}{*}{$\checkmark$} & Exact & 3.34\% & 80.77 & 4.21\% & 5.47 & 5.20\% & 19.62 & 8.37\% & 380.22 \\
 & & & Sparsified & 3.42\% & 76.94 & 4.18\% & 5.04 & 5.22\% & 29.99 & 6.77\% & 397.26 \\
 & & & Hard Assign. & 4.05\% & 74.37 & 4.74\% & 4.58 & 4.87\% & 107.01 & 4.28\% & 381.34 \\
\cmidrule(lr){3-12}
 & & \multirow{3}{*}{$\times$} & Exact & 3.28\% & 102.03 & 3.86\% & 6.75 & \textbf{4.22}\% & 41.38 & 4.07\% & 266.21 \\
 & & & Sparsified & 3.51\% & 96.45 & 3.86\% & 7.05 & 5.58\% & 28.95 & 10.50\% & 481.56 \\
 & & & Hard Assign. & 4.64\% & 89.76 & 5.22\% & 5.68 & 4.27\% & 108.49 & \textbf{3.26}\% & 1240.40 \\
\midrule
\multirow{12}{*}{\rotatebox{90}{\textbf{Standard Capacity}}} & \multirow{6}{*}{$\Psi$} & \multirow{3}{*}{$\checkmark$} & Exact & \textbf{3.03}\% & 87.69 & 4.83\% & 4.41 & 9.74\% & 6.92 & 26.38\% & 41.47 \\
 & & & Sparsified & 3.30\% & 76.35 & 4.75\% & 3.73 & 9.74\% & 9.63 & 28.24\% & 40.00 \\
 & & & Hard Assign. & 3.76\% & 74.22 & 5.06\% & 3.58 & 8.27\% & 16.75 & 22.30\% & 37.90 \\
\cmidrule(lr){3-12}
 & & \multirow{3}{*}{$\times$} & Exact & 3.17\% & 83.48 & 4.80\% & 4.11 & 9.05\% & 7.50 & 27.15\% & 20.81 \\
 & & & Sparsified & 3.71\% & 76.43 & 4.77\% & 3.95 & 9.02\% & 7.22 & 29.76\% & 21.13 \\
 & & & Hard Assign. & 4.56\% & 72.95 & 5.26\% & 3.77 & 8.01\% & 18.96 & 21.43\% & 19.78 \\
\cmidrule(lr){2-12}
 & \multirow{6}{*}{$(x, y)$} & \multirow{3}{*}{$\checkmark$} & Exact & 3.32\% & 76.27 & 5.14\% & 3.58 & 10.49\% & 6.03 & 25.02\% & 18.71 \\
 & & & Sparsified & 3.41\% & 74.26 & 5.09\% & 3.68 & 10.67\% & 6.58 & 26.30\% & 20.21 \\
 & & & Hard Assign. & 4.05\% & 70.81 & 5.13\% & 3.34 & 9.02\% & 6.49 & 21.31\% & 17.79 \\
\cmidrule(lr){3-12}
 & & \multirow{3}{*}{$\times$} & Exact & 3.28\% & 81.93 & 4.75\% & 3.97 & 8.44\% & 7.06 & 20.90\% & 20.28 \\
 & & & Sparsified & 3.69\% & 77.36 & \textbf{4.69}\% & 3.89 & 8.41\% & 7.85 & 23.74\% & 22.65 \\
 & & & Hard Assign. & 4.63\% & 71.96 & 4.88\% & 3.59 & \textbf{7.46}\% & 8.56 & \textbf{18.95}\% & 19.31 \\
\bottomrule
\end{tabular}
}
\end{table}

\section{How often is the optimal bin size underestimated?}
\label{app:k_plus_one}

In the \ac{CVRP}, the theoretical lower bound on the required fleet size is defined as $K_{\min} = \lceil \sum_{i=1}^N d_i / Q \rceil$. Under our experimental setup, node demands are uniformly sampled as $d_i \sim \mathcal{U}(1, 9)$ with a fixed vehicle capacity of $Q = 50$. Because the maximum demand ($d_{\max} = 9$) constitutes less than $20\%$ of the total vehicle capacity, and the uniform distribution ensures an abundance of small items to fill capacity gaps, the $K_{\min}$ bound is typically tight. Indeed, for large $N$, the probability of encountering a routing instance that strictly requires more than $K_{\min}$ vehicles is practically negligible. Consequently, we observe that computing $K_{\min}$ in this manner consistently yields a feasible fleet size across our evaluations.

Nevertheless, due to the strict constraints of the underlying bin packing problem, highly specific discrete demand distributions can theoretically introduce packing inefficiencies that necessitate an additional vehicle (i.e., $K = K_{\min} + 1$). To evaluate the robustness of our framework against these capacity boundaries, we analyze instances of size $N \in \{100, 200\}$ under the constant capacity setting, utilizing our best-performing model configuration (employing $\Psi$ and the $\mathcal{L}_{\text{BCE}}$ objective). 

As illustrated in Figure \ref{fig:optimality_gaps_histogram}, which details the distribution of optimality gaps for $N \in \{100, 200\}$, the vast majority of our predicted solutions match the near-optimal \ac{HGS} solutions in their required number of vehicles ($K$). For the minority of instances that fail to match, we find that supplying an extra vehicle allows the model's fleet size to align with the \ac{HGS} solutions. This suggests a straightforward heuristic to improve overall performance: execute the routing procedure for both $K_{\min}$ and $K_{\min} + 1$, and select the superior solution. Table \ref{tab:k_comparison} details the full comparison under this setup, demonstrating a strict, albeit variable, improvement in performance. For instance, using our best-performing configuration ($\Psi$ and $\mathcal{L}_\text{BCE}$), the optimality gap improves by a marginal $0.02\%$ (from $3.03\%$ to $3.01\%$) for CVRP100, but yields a substantial $3.32\%$ improvement (from $7.63\%$ to $4.31\%$) for CVRP1000.

Finally, to better understand the limitations of our method, we perform a deeper analysis of the worst-performing instances for $N \in \{100, 200\}$. Figures \ref{fig:n100_worst_different_k} and \ref{fig:n200_worst_different_k} indicate that a primary driver of poor performance is simply a discrepancy in the predicted $K$. Furthermore, Figure \ref{fig:n100_worst_short_routes} highlights another potential failure mode: the ground truth containing short routes near the depot.
This suggests that the model has a strong bias toward maximizing vehicle capacity utilization (greedy packing), struggling to recognize edge cases where deploying an under-utilized vehicle is optimal.
Improving the model's ability to balance capacity utilization with spatial efficiency presents a compelling avenue for future research.

\begin{figure}[ht]
    \centering
    \includegraphics[width=0.48\textwidth]{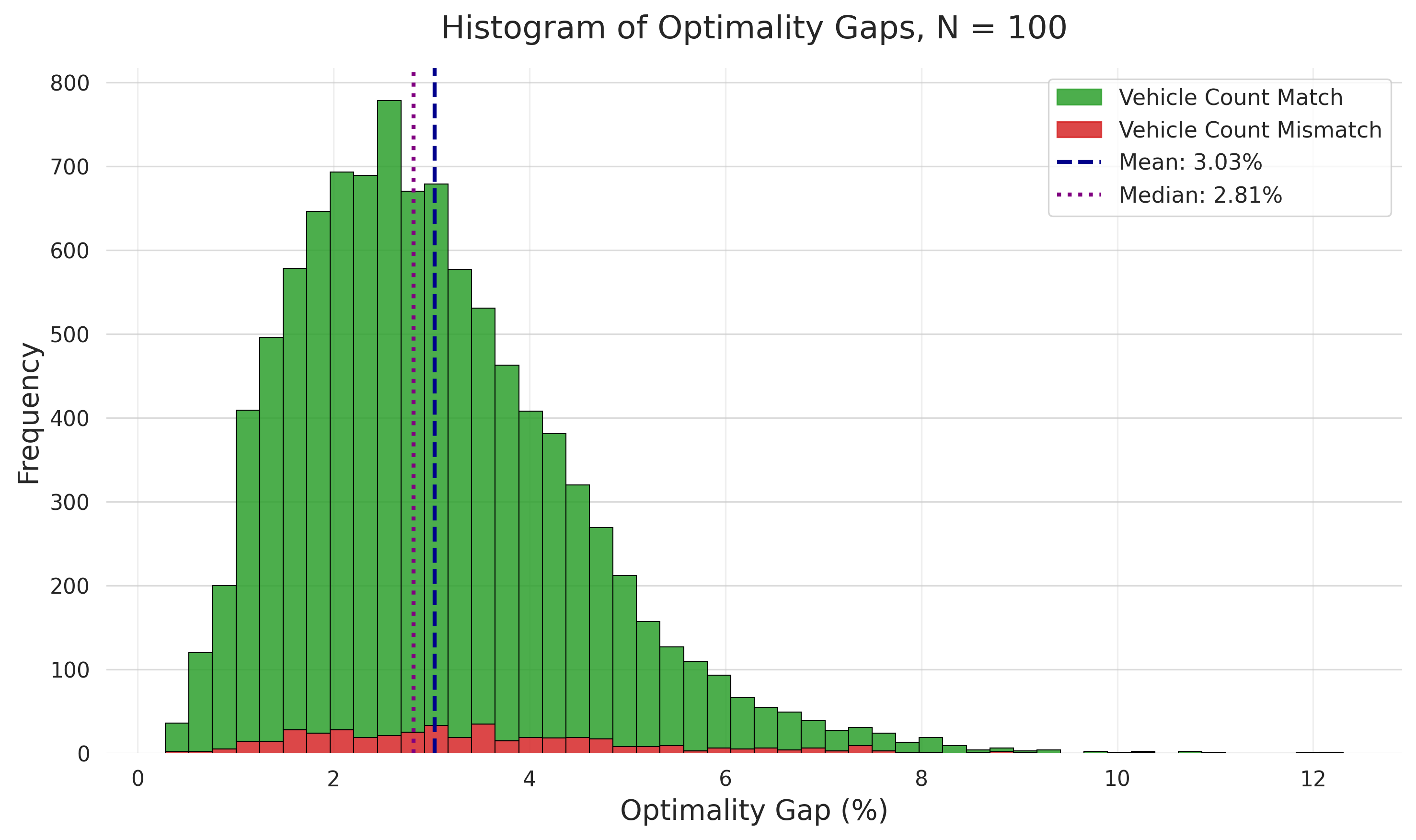}
    \hfill 
    \includegraphics[width=0.48\textwidth]{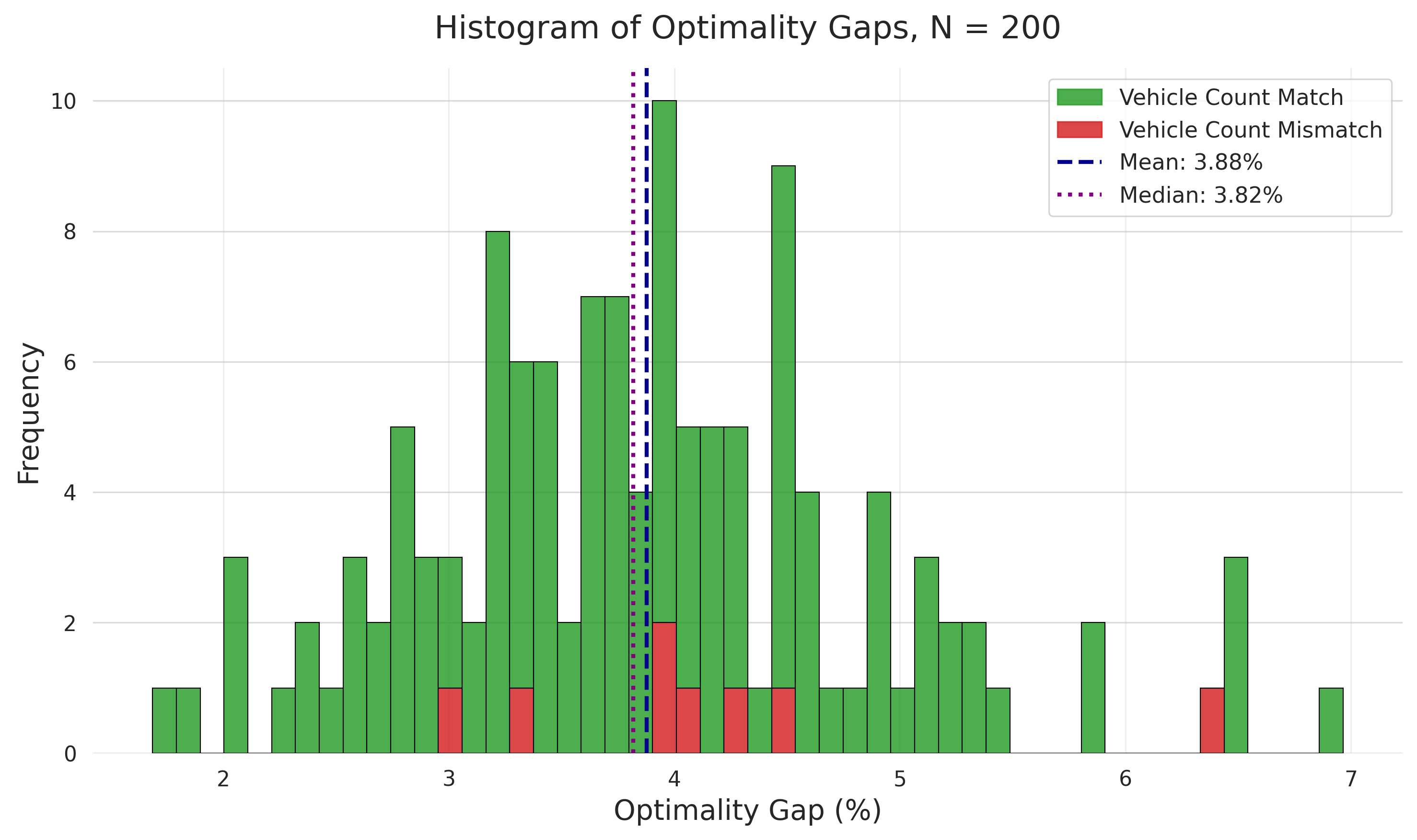}
    \caption{Distribution of optimality gaps across different problem sizes. Observe how $N=100$ exhibits a right skew, whereas $N=200$ resembles a normal distribution. A potential reason for the difference might be due to the sample size. Nonetheless, looking into the instances with the largest optimality gaps can inform future research.}
    \label{fig:optimality_gaps_histogram}
\end{figure}

\begin{figure}[ht]
    \centering
    \includegraphics[width=\textwidth]{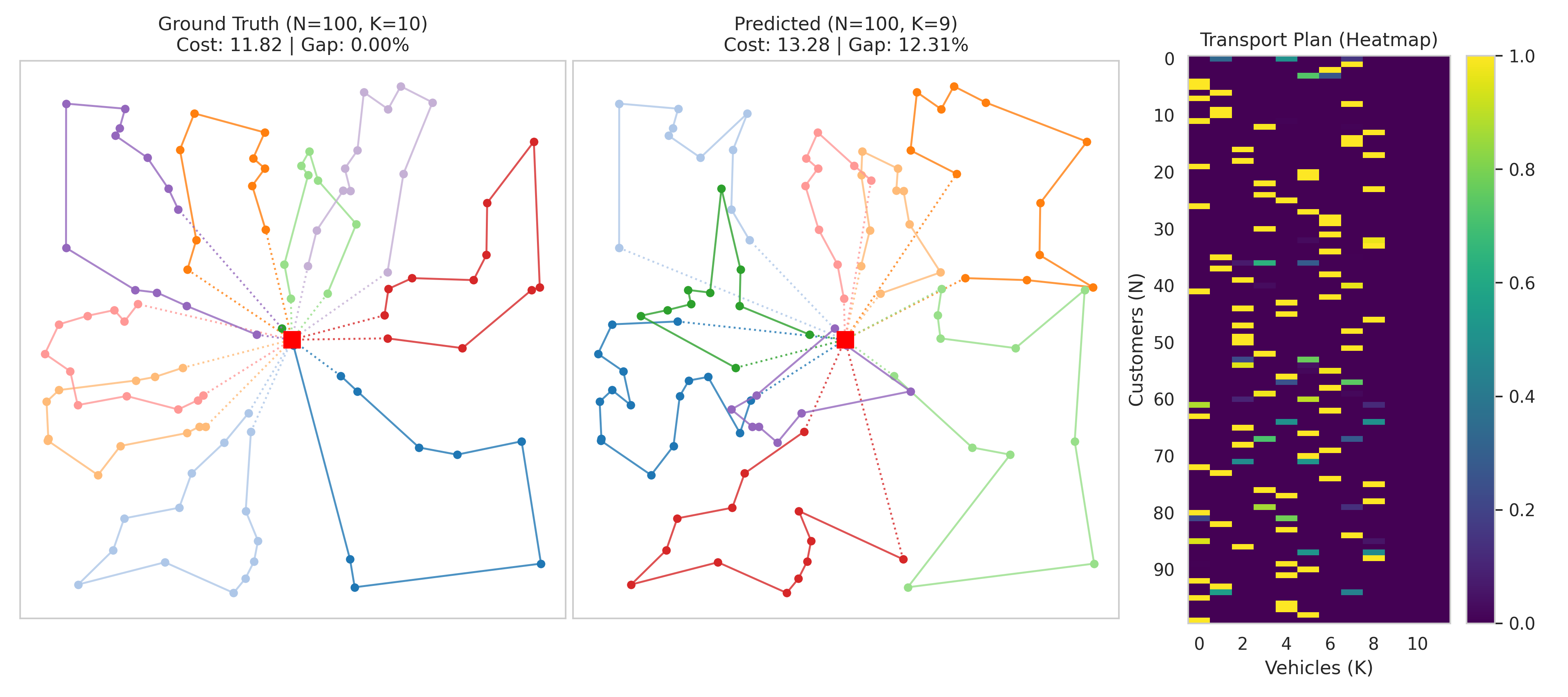}
    \includegraphics[width=\textwidth]{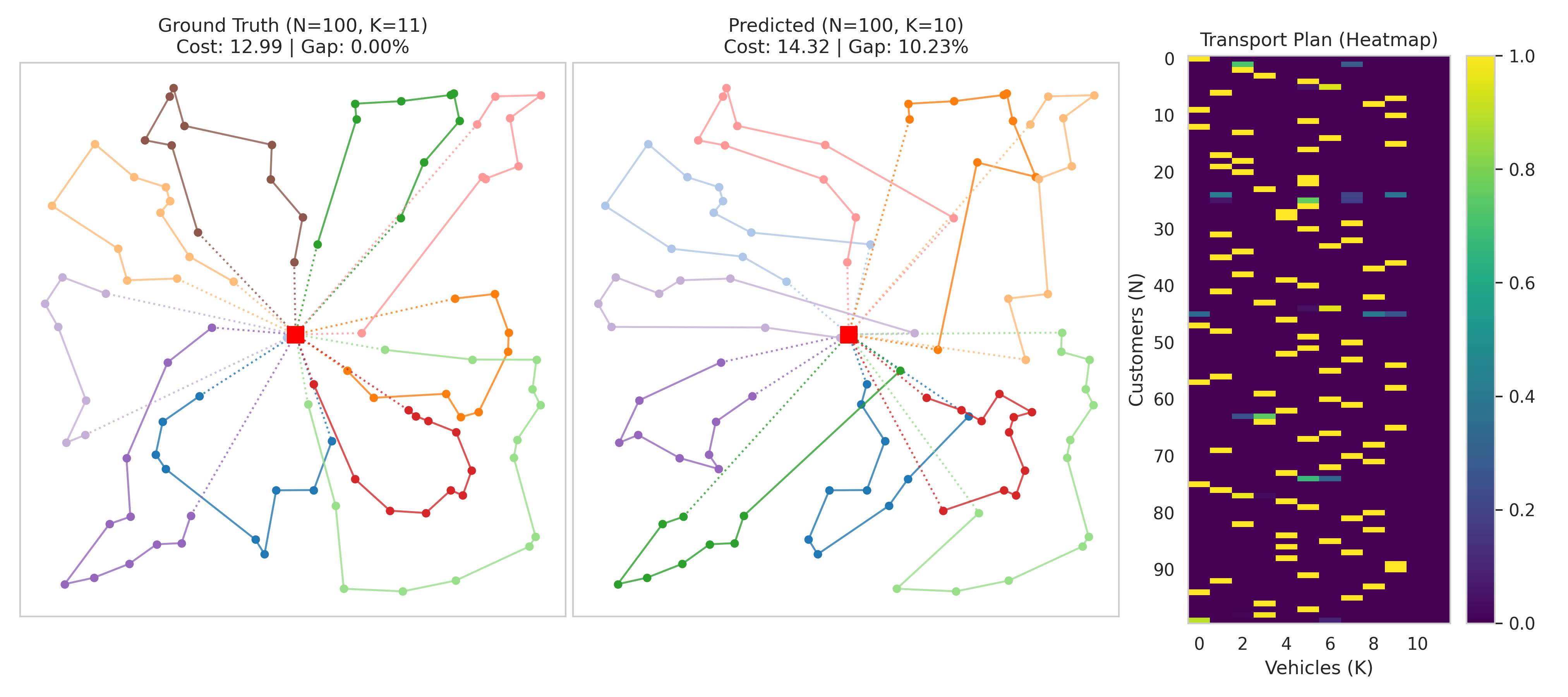}
    \caption{A selection from the ten solutions exhibiting the largest optimality gaps for $N=100$. In both examples, the \ac{HGS} solution returned a $K$ value of $K_{\min} + 1$. Despite the significant optimality gaps, the associated transport plans indicate high confidence across many of the assignments.}
    \label{fig:n100_worst_different_k}
\end{figure}

\begin{figure}[ht]
    \centering
    \includegraphics[width=\textwidth]{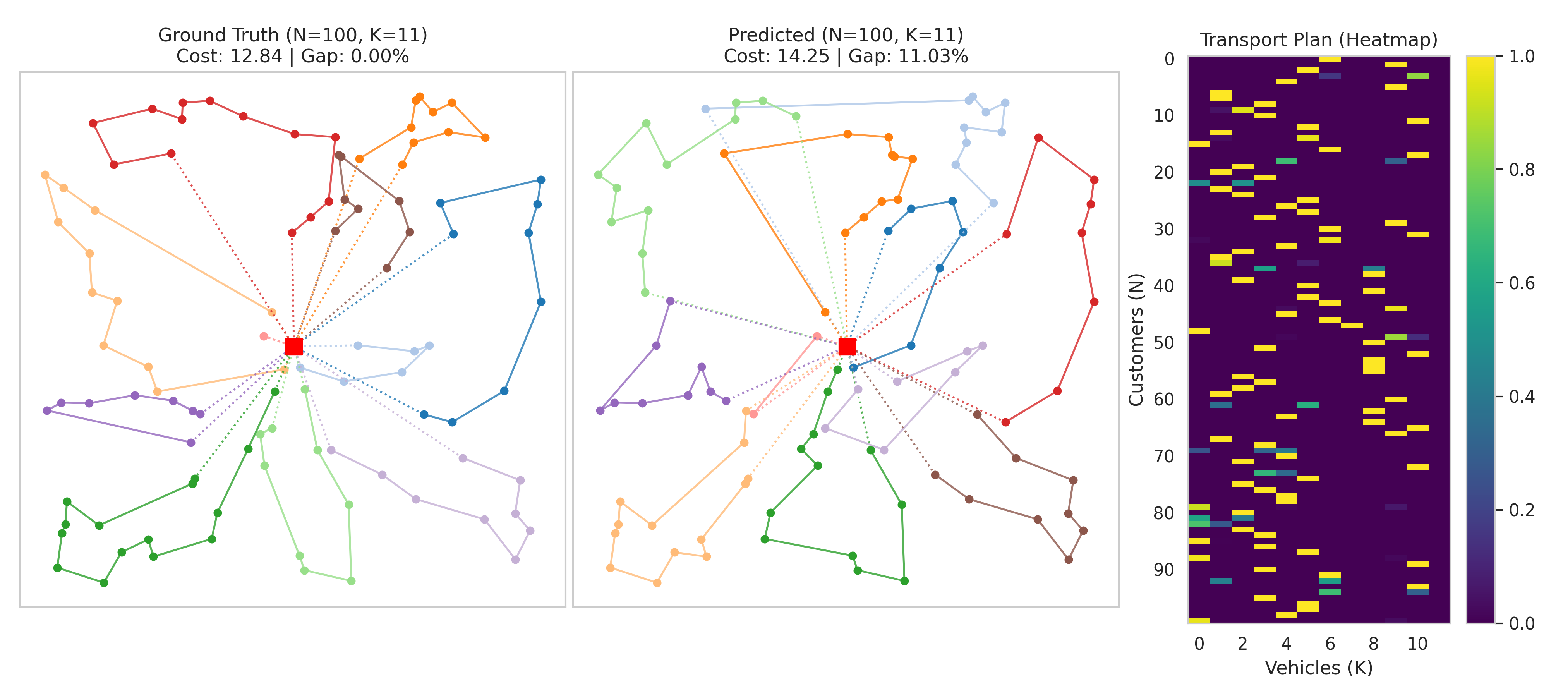}
    \includegraphics[width=\textwidth]{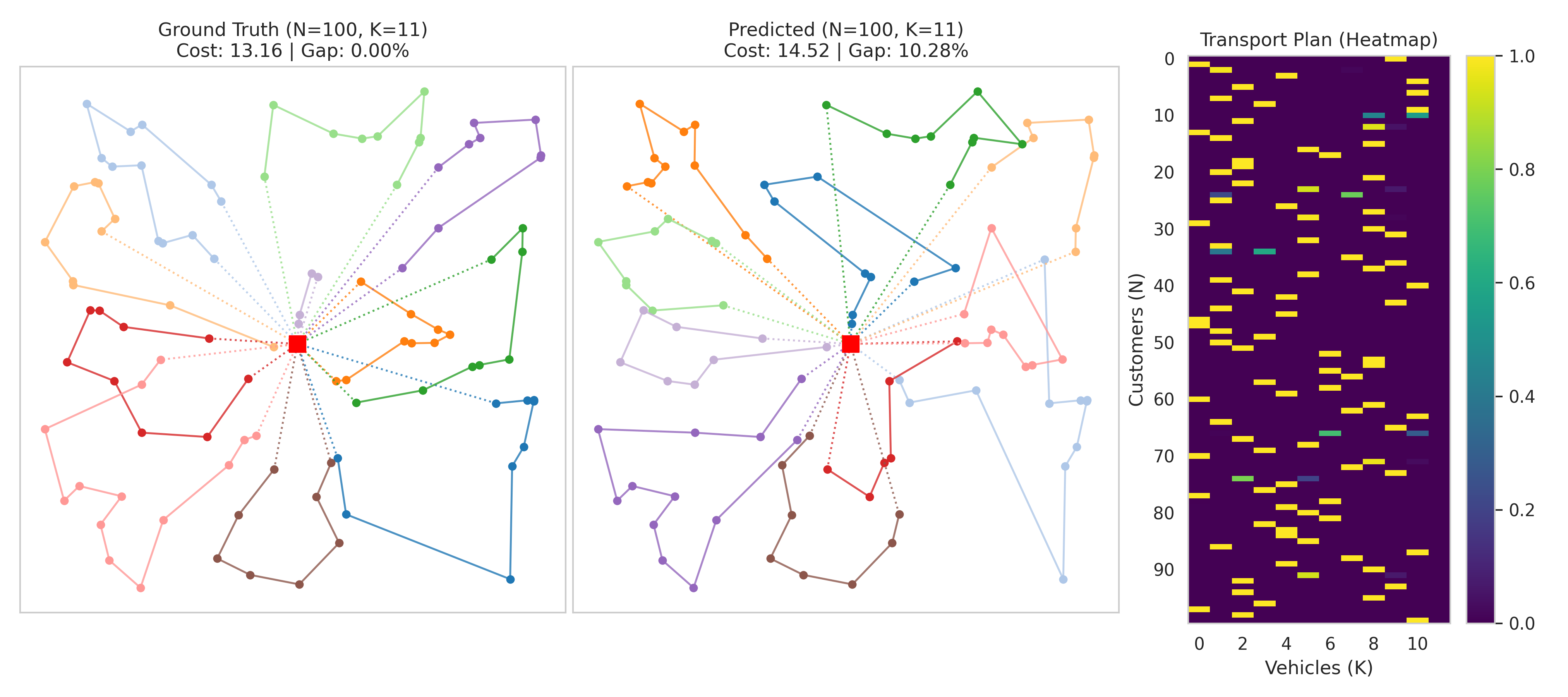}
    \caption{A selection from the ten solutions exhibiting the largest optimality gaps for $N=100$. Observe that both examples feature noticeably short routes with very few stops situated near the depot. Specifically, in the ground truth top image, the \textcolor{routepink}{pink route} consists of a single stop adjacent to the depot, while in the ground truth bottom image, the \textcolor{routelilac}{lilac route} contains only four stops close to the depot. Despite the significant optimality gaps, the associated transport plans indicate high confidence across many of the assignments.}
    \label{fig:n100_worst_short_routes}
\end{figure}

\begin{figure}[ht]
    \centering
    \includegraphics[width=\textwidth]{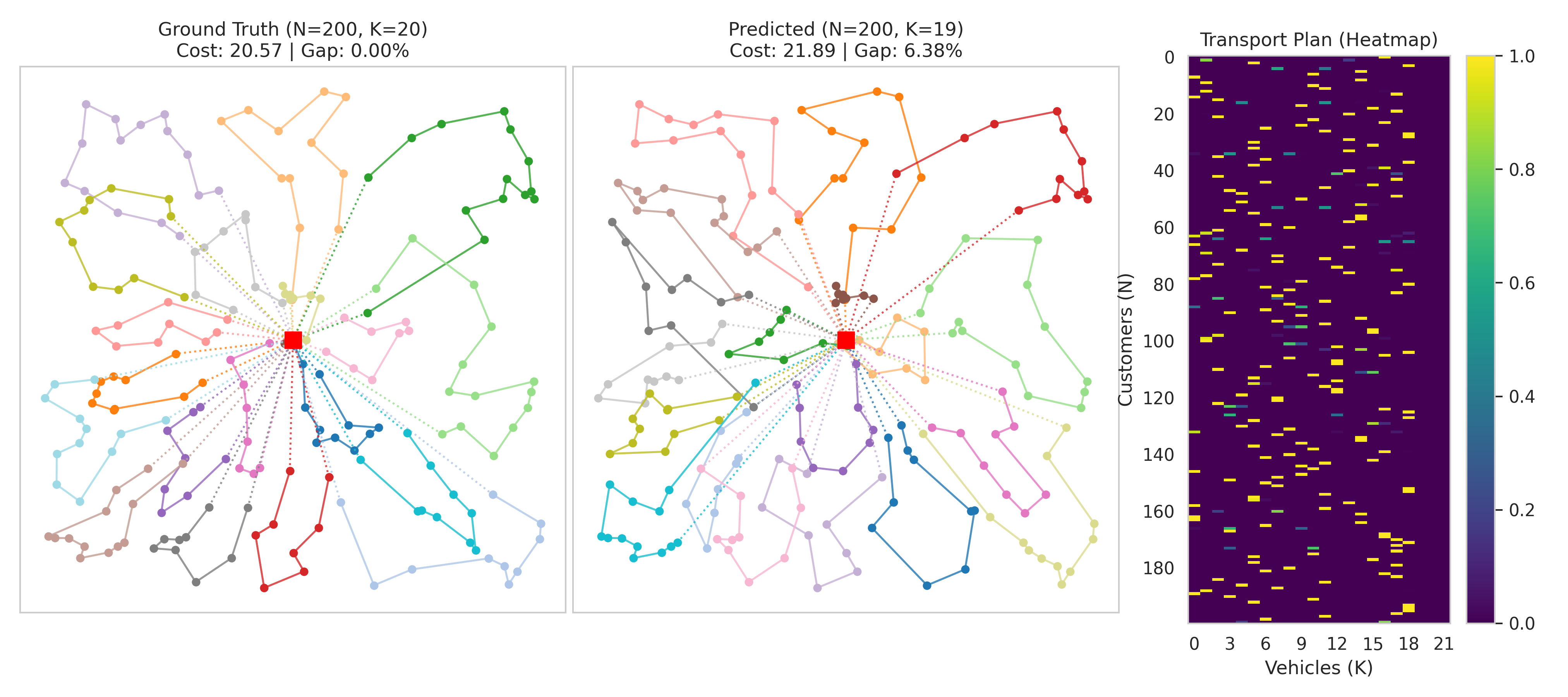}
    \caption{A selection from the ten solutions exhibiting the largest optimality gaps for $N=200$ problems. In general, as seen from the $N=200$ histogram in Figure \ref{fig:optimality_gaps_histogram}, low optimality gaps of under $7\%$ is generally achieved. In this example, the \ac{HGS} solution returned a $K$ value of $K_{\min} + 1$. In this case, the associated transport plan shows high confidence across many of the assignments and achieves an optimality gap of $6.38\%$.}
    \label{fig:n200_worst_different_k}
\end{figure}

\begin{table}[ht]
\centering
\caption{Performance comparison between strictly adhering to the theoretical minimum fleet size ($K_{\text{min}}$) versus taking the best solution found across $K_{\text{min}}$ and $K_{\text{min}} + 1$ vehicles. As discrete demand distributions can occasionally induce packing inefficiencies, expanding the solution space to include an additional vehicle captures these edge cases and yields strictly improved routing costs.}
\label{tab:k_comparison}
\resizebox{\linewidth}{!}{
\begin{tabular}{c c c l rr rr rr rr}
\toprule
\multirow{2}{*}{\textbf{Setup}} & \multirow{2}{*}{\textbf{Input}} & \multirow{2}{*}{$\mathcal{L}_{\text{BCE}}$} & \multirow{2}{*}{\textbf{Decoding}} & \multicolumn{2}{c}{CVRP100} & \multicolumn{2}{c}{CVRP200} & \multicolumn{2}{c}{CVRP500} & \multicolumn{2}{c}{CVRP1000} \\
\cmidrule(lr){5-6} \cmidrule(lr){7-8} \cmidrule(lr){9-10} \cmidrule(lr){11-12}
& & & & Gap & Time & Gap & Time & Gap & Time & Gap & Time \\
\midrule
\multirow{24}{*}{\rotatebox{90}{\textbf{Constant Capacity}}} & \multirow{12}{*}{$\Psi$} & \multirow{6}{*}{$\checkmark$} & Exact ($K_{\text{min}}$) & 3.030\% & 77.60 & 3.877\% & 5.41 & 4.540\% & 24.83 & 7.630\% & 359.49 \\
 &  &  & Exact (BB) & \textbf{3.007}\% & 150.07 & 3.786\% & 9.28 & 4.511\% & 41.79 & 4.311\% & 604.53 \\
\cmidrule{4-12}
 &  &  & Sparsified ($K_{\text{min}}$) & 3.239\% & 73.44 & 3.787\% & 5.12 & 4.458\% & 35.21 & 9.555\% & 410.24 \\
 &  &  & Sparsified (BB) & 3.098\% & 145.65 & 3.709\% & 9.10 & 4.429\% & 53.02 & 4.382\% & 722.47 \\
\cmidrule{4-12}
 &  &  & Hard Assign. ($K_{\text{min}}$) & 3.759\% & 75.30 & 4.866\% & 3.92 & 4.788\% & 49.27 & 3.768\% & 609.85 \\
 &  &  & Hard Assign. (BB) & 3.500\% & 142.57 & 4.570\% & 7.67 & 4.647\% & 70.81 & 3.699\% & 1136.42 \\
\cmidrule{3-12}
 &  & \multirow{6}{*}{$\times$} & Exact ($K_{\text{min}}$) & 3.161\% & 81.97 & 3.896\% & 5.65 & 4.382\% & 43.91 & 4.774\% & 172.39 \\
 &  &  & Exact (BB) & 3.132\% & 161.72 & 3.703\% & 12.24 & 4.330\% & 62.13 & 4.692\% & 284.20 \\
\cmidrule{4-12}
 &  &  & Sparsified ($K_{\text{min}}$) & 3.619\% & 73.04 & 4.964\% & 4.45 & 4.429\% & 37.73 & 13.082\% & 501.21 \\
 &  &  & Sparsified (BB) & 3.286\% & 143.92 & \textbf{3.675}\% & 8.94 & 4.389\% & 56.89 & 5.041\% & 926.52 \\
\cmidrule{4-12}
 &  &  & Hard Assign. ($K_{\text{min}}$) & 4.531\% & 68.87 & 4.953\% & 4.02 & 4.502\% & 50.05 & 4.054\% & 745.55 \\
 &  &  & Hard Assign. (BB) & 4.168\% & 135.99 & 4.744\% & 7.78 & 4.413\% & 72.51 & 3.914\% & 1208.47 \\
\cmidrule{3-12}
 & \multirow{12}{*}{$(x, y)$} & \multirow{6}{*}{$\checkmark$} & Exact ($K_{\text{min}}$) & 3.336\% & 80.77 & 4.209\% & 5.47 & 5.203\% & 19.62 & 8.369\% & 380.22 \\
 &  &  & Exact (BB) & 3.313\% & 159.20 & 4.085\% & 9.93 & 5.167\% & 34.41 & 4.907\% & 548.95 \\
\cmidrule{4-12}
 &  &  & Sparsified ($K_{\text{min}}$) & 3.424\% & 76.94 & 4.184\% & 5.04 & 5.217\% & 29.99 & 6.769\% & 397.26 \\
 &  &  & Sparsified (BB) & 3.326\% & 152.31 & 4.036\% & 9.38 & 5.176\% & 48.95 & 5.043\% & 689.43 \\
\cmidrule{4-12}
 &  &  & Hard Assign. ($K_{\text{min}}$) & 4.048\% & 74.37 & 4.740\% & 4.58 & 4.871\% & 107.01 & 4.279\% & 381.34 \\
 &  &  & Hard Assign. (BB) & 3.795\% & 145.77 & 4.567\% & 8.66 & 4.762\% & 120.77 & 4.149\% & 552.14 \\
\cmidrule{3-12}
 &  & \multirow{6}{*}{$\times$} & Exact ($K_{\text{min}}$) & 3.278\% & 102.03 & 3.857\% & 6.75 & 4.216\% & 41.38 & 4.073\% & 266.21 \\
 &  &  & Exact (BB) & 3.251\% & 198.98 & 3.715\% & 13.23 & 4.176\% & 69.27 & 3.957\% & 512.71 \\
\cmidrule{4-12}
 &  &  & Sparsified ($K_{\text{min}}$) & 3.512\% & 96.45 & 3.860\% & 7.05 & 5.580\% & 28.95 & 10.501\% & 481.56 \\
 &  &  & Sparsified (BB) & 3.327\% & 190.03 & 3.712\% & 13.59 & 4.149\% & 54.71 & 7.231\% & 955.83 \\
\cmidrule{4-12}
 &  &  & Hard Assign. ($K_{\text{min}}$) & 4.640\% & 89.76 & 5.223\% & 5.68 & 4.270\% & 108.49 & 3.262\% & 1240.40 \\
 &  &  & Hard Assign. (BB) & 4.281\% & 178.09 & 4.864\% & 11.34 & \textbf{4.145}\% & 135.81 & \textbf{3.180}\% & 2074.16 \\
\midrule
\multirow{24}{*}{\rotatebox{90}{\textbf{Standard Capacity}}} & \multirow{12}{*}{$\Psi$} & \multirow{6}{*}{$\checkmark$} & Exact ($K_{\text{min}}$) & 3.034\% & 87.69 & 4.830\% & 4.41 & 9.741\% & 6.92 & 26.375\% & 41.47 \\
 &  &  & Exact (BB) & \textbf{3.011}\% & 171.66 & 4.830\% & 8.81 & 9.712\% & 13.85 & 26.375\% & 82.94 \\
\cmidrule{4-12}
 &  &  & Sparsified ($K_{\text{min}}$) & 3.299\% & 76.35 & 4.749\% & 3.73 & 9.743\% & 9.63 & 28.242\% & 40.00 \\
 &  &  & Sparsified (BB) & 3.057\% & 150.24 & 4.749\% & 7.68 & 9.730\% & 18.85 & 28.242\% & 79.99 \\
\cmidrule{4-12}
 &  &  & Hard Assign. ($K_{\text{min}}$) & 3.758\% & 74.22 & 5.062\% & 3.58 & 8.268\% & 16.75 & 22.304\% & 37.90 \\
 &  &  & Hard Assign. (BB) & 3.504\% & 143.80 & 5.044\% & 7.11 & 8.247\% & 26.16 & 22.304\% & 74.75 \\
\cmidrule{3-12}
 &  & \multirow{6}{*}{$\times$} & Exact ($K_{\text{min}}$) & 3.174\% & 83.48 & 4.796\% & 4.11 & 9.046\% & 7.50 & 27.148\% & 20.81 \\
 &  &  & Exact (BB) & 3.144\% & 161.72 & 4.796\% & 8.09 & 9.042\% & 14.84 & 27.148\% & 41.65 \\
\cmidrule{4-12}
 &  &  & Sparsified ($K_{\text{min}}$) & 3.710\% & 76.43 & 4.767\% & 3.95 & 9.022\% & 7.22 & 29.759\% & 21.13 \\
 &  &  & Sparsified (BB) & 3.336\% & 152.07 & 4.766\% & 7.99 & 9.017\% & 14.84 & 29.759\% & 42.10 \\
\cmidrule{4-12}
 &  &  & Hard Assign. ($K_{\text{min}}$) & 4.555\% & 72.95 & 5.256\% & 3.77 & 8.013\% & 18.96 & 21.427\% & 19.78 \\
 &  &  & Hard Assign. (BB) & 4.191\% & 143.51 & 5.240\% & 7.64 & 8.013\% & 38.30 & 21.427\% & 39.14 \\
 \cmidrule{3-12}
 & \multirow{12}{*}{$(x, y)$} & \multirow{6}{*}{$\checkmark$} & Exact ($K_{\text{min}}$) & 3.316\% & 76.27 & 5.142\% & 3.58 & 10.487\% & 6.03 & 25.018\% & 18.71 \\
 &  &  & Exact (BB) & 3.296\% & 148.82 & 5.142\% & 7.25 & 10.487\% & 12.43 & 25.018\% & 37.68 \\
\cmidrule{4-12}
 &  &  & Sparsified ($K_{\text{min}}$) & 3.408\% & 74.26 & 5.092\% & 3.68 & 10.673\% & 6.58 & 26.305\% & 20.21 \\
 &  &  & Sparsified (BB) & 3.319\% & 144.79 & 5.092\% & 7.34 & 10.668\% & 12.95 & 26.305\% & 40.41 \\
\cmidrule{4-12}
 &  &  & Hard Assign. ($K_{\text{min}}$) & 4.054\% & 70.81 & 5.130\% & 3.34 & 9.024\% & 6.49 & 21.309\% & 17.79 \\
 &  &  & Hard Assign. (BB) & 3.806\% & 138.46 & 5.122\% & 6.63 & 9.024\% & 12.59 & 21.309\% & 35.58 \\
\cmidrule{3-12}
 &  & \multirow{6}{*}{$\times$} & Exact ($K_{\text{min}}$) & 3.283\% & 81.93 & 4.745\% & 3.97 & 8.438\% & 7.06 & 20.904\% & 20.28 \\
 &  &  & Exact (BB) & 3.252\% & 159.29 & 4.729\% & 7.83 & 8.432\% & 13.48 & 20.904\% & 40.69 \\
\cmidrule{4-12}
 &  &  & Sparsified ($K_{\text{min}}$) & 3.691\% & 77.36 & 4.694\% & 3.89 & 8.407\% & 7.85 & 23.744\% & 22.65 \\
 &  &  & Sparsified (BB) & 3.401\% & 151.54 & \textbf{4.669}\% & 7.79 & 8.401\% & 16.11 & 23.744\% & 44.41 \\
\cmidrule{4-12}
 &  &  & Hard Assign. ($K_{\text{min}}$) & 4.633\% & 71.96 & 4.877\% & 3.59 & 7.465\% & 8.56 & \textbf{18.952}\% & 19.31 \\
 &  &  & Hard Assign. (BB) & 4.291\% & 141.91 & 4.808\% & 7.24 & \textbf{7.446}\% & 16.56 & \textbf{18.952}\% & 39.19 \\
\bottomrule
\multicolumn{11}{l}{BB = Best of $K_{\text{min}}$ and $K_{\text{min}} + 1$.}
\end{tabular}
}
\end{table}


\clearpage

\end{document}

%% file: acronyms.tex
\begin{acronym}
    \acro{NS}{nucleus sampling}
    \acro{GS}{greedy sampling}
    \acro{TSP}{Traveling Salesman Problem}
    \acro{VRP}{Vehicle Routing Problem}
    \acro{CVRP}{Capacitated Vehicle Routing Problem}
    \acro{OR}{Operations Research}
    \acro{BKS}{best-known solution}
    \acro{AM}{Attention Model}
    \acro{CO}{Combinatorial Optimization}
    \acro{MILP}{Mixed Integer Linear Programming}
    \acro{CV}{Computer Vision}
    \acro{NLP}{Natural Language Processing}
    \acro{CNN}{Convolutional Neural Network}
    \acro{ML}{Machine Learning}
    \acro{HGS}{Hybrid Genetic Search}
    \acro{LNS}{Large Neighborhood Search}
    \acro{GA}{Genetic Algorithm}
    \acro{CW}{Clarke-Wright}
    \acro{LK}{Lin-Kernighan}
    \acro{LKH}{Lin-Kernighan-Helsgaun}
    \acro{LKH-3}{Lin-Kernighan-Helsgaun-3}
    \acro{LLaMA-2}{Large Language Model Meta AI 2}
    \acro{DL}{Deep Learning}
    \acro{PN}{Pointer Network}
    \acro{Seq2Seq}{Sequence-to-Sequence}
    \acro{PPO}{Proximal Policy Optimization}
    \acro{LLM}{Large Language Model}
    \acro{BFS}{Breadth-First Search}
    \acro{MCTS}{Monte Carlo Tree Search}
    \acro{AI}{Artificial Intelligence}
    \acro{GPT}{Generative Pre-trained Transformer}
    \acro{ChatGPT}{Chat Generative Pre-trained Transformer}
    \acro{BCP}{Branch-Cut-and-Price}
    \acro{lm-SRC}{Limited Memory Subset Row Cut}
    \acro{SA}{Simulated Annealing}
    \acro{PILS}{Pattern Injection Local Search}
    \acro{KGLS}{Knowledge Guided Local Search}
    \acro{RL}{Reinforcement Learning}
    \acro{POPMUSIC}{Partial Optimization Metaheuristic under Special Intensification Conditions}
    \acro{SISR}{Slack Induction by String Removal}
    \acro{SIFT}{Scale Invariant Feature Transform}
    \acro{HOG}{Histogram of Oriented Gradients}
    \acro{GNN}{Graph Neural Network}
    \acro{NN}{Neural Network}
    \acro{GELU}{Gaussian Error Linear Unit}
    \acro{RELU}{Rectified Linear Unit}
    \acro{T5}{Text-to-Text Transfer Transformer}
    \acro{LM}{Language Model}
    \acro{Prefix LM}{Prefix Language Model}
    \acro{SGD}{Stochastic Gradient Descent}
    \acro{CL}{Curriculum Learning}
    \acro{MLOps}{Machine Learning Operations}
    \acro{RNN}{Recurrent Neural Network}
    \acro{MHA}{Multi-Head Attention}
    \acro{MP}{Mathematical Programming}
    \acro{DRL}{Deep Reinforcement Learning}
    \acro{NCO}{Neural Combinatorial Optimization}
    \acro{SL}{Supervised Learning}
    \acro{HELD}{Heavy Encoder Light Decoder}
    \acro{LEHD}{Light Encoder Heavy Decoder}
    \acro{RCSPP}{resource-constrained shortest path problem}
    \acro{CSPT}{capacitated shortest path tree}
    \acro{FY}{Fenchel-Young}
    \acro{MLP}{Multi-Layer Perceptron}
    \acro{CFRS}{Cluster-First--Route-Second}
    \acro{RFCS}{Route-First--Cluster-Second}
    \acro{CCLP}{Capacitated Concentrator Location Problem}
    \acro{GAP}{Generalized Assignment Problem}
    \acro{LP}{Linear Program}
    \acro{ST}{Seed Transformer}
    \acro{CT}{Clustering Transformer}
    \acro{BCE}{Binary Cross-Entropy}
    \acro{CE}{Cross-Entropy}
    \acro{LR}{learning rate}
    \acro{OT}{Optimal Transport}
    \acro{SK}{Sinkhorn-Knopp}
    \acro{AR}{autoregressive}
    \acro{NAR}{non-autoregressive}
    \acro{CAP}{Capacitated Assignment Problem}
    \acro{NN}{nearest neighbor}
    \acro{SMAE}{Spatial Masked Autoencoder}
    \acro{CAGD}{Capacity Aware Greedy Decoding}
\end{acronym}

%% file: architecture.tex
\begin{tikzpicture}[
    >=Latex,
    font=\sffamily\normalsize, 
    module/.style={rectangle, rounded corners=2mm, draw=blue!80!black, fill=blue!5, thick, minimum width=3.2cm, minimum height=1.2cm, align=center},
    vmodule/.style={rectangle, rounded corners=2mm, draw=blue!80!black, fill=blue!5, thick, minimum width=1.4cm, minimum height=3.6cm, align=center},
    htensor/.style={rectangle, rounded corners=0.5mm, draw=black!70, fill=gray!10, thick, minimum width=1.8cm, minimum height=0.45cm, align=center, inner sep=1pt, font=\normalsize},
    faded/.style={rectangle, rounded corners=0.5mm, draw=black!30, fill=gray!5, text=black!40, 
    thick, minimum width=1.8cm, minimum height=0.45cm, align=center, inner sep=1pt, font=\normalsize},
    tensor/.style={rectangle, rounded corners=1mm, draw=black!70, fill=gray!10, thick, minimum width=2.4cm, minimum height=0.8cm, align=center},
    op/.style={rectangle, rounded corners=2mm, draw=green!60!black, fill=green!5, thick, minimum height=0.8cm, align=center},
    filterop/.style={rectangle, rounded corners=2mm, draw=teal!80!black, fill=teal!10, thick, minimum height=0.8cm, align=center},
    solver/.style={rectangle, rounded corners=2mm, draw=orange!80!black, fill=orange!10, thick, minimum width=2.4cm, minimum height=0.8cm, align=center},
    phasebox/.style={draw=black!50, dotted, thick, rounded corners=3mm},
    forward/.style={->, thick, draw=black!80},
    hint/.style={->, thick, dashed, draw=blue!80}
]

\def\yTop{0}
\def\yBot{-4.8} 


\node[htensor] (i0) at (0, 1.3) {DEPOT};
\node[htensor] (i1) at (0, 0.65) {NODE 1};
\node[htensor] (i2) at (0, 0.0) {NODE 2};
\node at (0, -0.55) {$\vdots$};
\node[htensor] (in) at (0, -1.3) {NODE $|\mathcal{X}|$};
\node[fit=(i0)(in), draw=black!50, rounded corners=1mm, inner sep=3pt] (p0_in) {};

\node[vmodule] (smae) at (2.6, 0) {\rotatebox{90}{\begin{tabular}{c}Spatial Masked\\Autoencoder\end{tabular}}};
\node[op, minimum width=2.6cm, minimum height=2.4cm] (psi) at (5.8, 0) {Support Prior\\[1mm]$\Psi \in \mathbb{R}^{(|\mathcal{X}|+1) \times d/2}$};

\draw[forward] (p0_in.east) -- (smae.west);
\draw[forward] (smae.east) -- (psi.west);

\node[htensor] (p1_0) at (10.6, 1.3) {$x_0$};
\node[htensor] (p1_1) at (10.6, 0.65) {$x_1$};
\node[htensor] (p1_2) at (10.6, 0.0) {$x_2$};
\node at (10.6, -0.55) {$\vdots$};
\node[htensor] (p1_n) at (10.6, -1.3) {$x_N$};
\node[fit=(p1_0)(p1_n), draw=black!50, rounded corners=1mm, inner sep=3pt] (p1_in) {};
\node[vmodule] (st) at (13.2, 0) {\rotatebox{90}{Seed Transformer}};

\node[htensor] (h0) at (15.8, 1.3) {$h_0$};
\node[htensor] (h1) at (15.8, 0.65) {$h_1$};
\node[htensor] (h2) at (15.8, 0.0) {$h_2$};
\node at (15.8, -0.55) {$\vdots$};
\node[htensor] (hn) at (15.8, -1.3) {$h_N$};
\node[fit=(h0)(hn), draw=black!50, rounded corners=1mm, inner sep=3pt] (H_box) {};

\node[vmodule] (greedy_op) at (18.4, 0) {\rotatebox{90}{\begin{tabular}{c}Capacity-Aware\\Greedy Decoding\end{tabular}}};
\node[faded] (cg_h0) at (21.0, 1.3) {$h_0$};
\node[htensor, fill=blue!20, draw=blue!80!black] (cg_c1) at (21.0, 0.65) {$c_1$};
\node[faded] (cg_h2) at (21.0, 0.0) {$h_2$};
\node[text=black!40] at (21.0, -0.55) {$\vdots$};
\node[htensor, fill=blue!20, draw=blue!80!black] (cg_ck) at (21.0, -1.3) {$c_K$};
\node[fit=(cg_h0)(cg_ck), draw=black!50, rounded corners=1mm, inner sep=3pt] (cagd_out) {};

\draw[forward] (p1_in.east) -- (st.west);
\draw[forward] (st.east) -- (H_box.west);
\draw[forward] (H_box.east) -- (greedy_op.west);
\draw[forward] (greedy_op.east) -- (cagd_out.west);


\node[htensor] (p2_0) at (0, \yBot+1.5) {$h_0$};
\node[htensor] (p2_1) at (0, \yBot+0.9) {$h_1$};
\node at (0, \yBot+0.4) {$\vdots$};
\node[htensor] (p2_n) at (0, \yBot-0.2) {$h_N$};
\node[htensor, fill=blue!10, draw=blue!80!black] (p2_c1) at (0, \yBot-0.9) {$c_1$};
\node at (0, \yBot-1.4) {$\vdots$};
\node[htensor, fill=blue!10, draw=blue!80!black] (p2_ck) at (0, \yBot-2.0) {$c_K$};
\node[fit=(p2_0)(p2_ck), draw=black!50, rounded corners=1mm, inner sep=3pt] (p2_in) {};

\node[vmodule, minimum height=4.2cm] (ct) at (2.6, \yBot-0.25) {\rotatebox{90}{Clustering Transformer}};
\node[op, inner sep=4pt, minimum width=2.0cm, minimum height=2.4cm] (delta) at (5.2, \yBot-0.25) {Latent Distances\\[1mm]$\Delta \in \mathbb{R}^{N \times d}$};
\node[vmodule, minimum height=4.2cm] (ot_layer) at (7.8, \yBot-0.25) {\rotatebox{90}{\begin{tabular}{c}Optimal Transport\\Layer\end{tabular}}};
\node[op, inner sep=4pt, minimum width=2.8cm, minimum height=1.6cm] (yhat) at (10.4, \yBot-0.25) {Transport\\Probabilities\\[1mm]$\hat{Y}_{ij} = \frac{\Pi^*_{ij}}{q_i}$};
\draw[forward] (p2_in.east |- ct.west) -- node[above=1pt, font=\scriptsize] {$+E_{\text{type}}$} (ct.west);
\draw[forward] (ct.east) -- (delta.west);
\draw[forward] (delta.east) -- (ot_layer.west);
\draw[forward] (ot_layer.east) -- (yhat.west);

\node[op, minimum width=1.8cm] (delta_p3) at (14.2, \yBot-0.25) {$\Delta$};
\node[filterop] (op_hard) at (17.2, \yBot-0.25) {Fix $\hat{Y} > \tau_{\text{high}}$};
\node[filterop] (op_sparse) at (17.2, \yBot-1.45) {Filter $\hat{Y} \ge \tau_{\text{low}}$};
\node[solver] (exact) at (20.0, \yBot+0.95) {Exact MIP};
\node[solver] (reduced) at (20.0, \yBot-0.25) {Reduced MIP};
\node[solver] (sparse) at (20.0, \yBot-1.45) {Sparse MIP};

\node (out_exact) at (22.0, \yBot+0.95) {$\bar{Y}_{\text{exact}}$};
\node (out_reduced) at (22.0, \yBot-0.25) {$\bar{Y}_{\text{fixed}}$};
\node (out_sparse) at (22.0, \yBot-1.45) {$\bar{Y}_{\text{select}}$};

\coordinate (fork) at (15.6, \yBot-0.25);
\draw[thick, draw=black!80] (delta_p3.east) -- (fork);
\draw[forward] (fork) |- (exact.west);
\draw[forward] (fork) -- (op_hard.west);
\draw[forward] (fork) |- (op_sparse.west);
\draw[forward] (exact.east) -- (out_exact.west);
\draw[forward] (op_hard.east) -- (reduced.west);
\draw[forward] (reduced.east) -- (out_reduced.west);
\draw[forward] (op_sparse.east) -- (sparse.west);
\draw[forward] (sparse.east) -- (out_sparse.west);

\begin{scope}[on background layer]
    \draw[phasebox] (-1.3, 2.1) rectangle (22.7, -2.1);
    \draw[draw=black!50, dotted, thick] (8.4, 2.1) -- (8.4, -2.1); 
    
    \node[anchor=south west, font=\bfseries, inner sep=0pt, outer sep=5pt] at (-1.3, 2.1) {Pre-training};
    \node[anchor=south west, font=\bfseries, inner sep=0pt, outer sep=5pt] at (8.6, 2.1) {Phase 1: Seed Generation};

    \draw[phasebox] (-1.3, -2.8) rectangle (22.7, -7.6);
    \draw[draw=black!50, dotted, thick] (12.6, -2.8) -- (12.6, -7.6); 
    
    \node[anchor=south west, font=\bfseries, inner sep=0pt, outer sep=5pt] at (-1.3, -2.8) {Phase 2: Global Assignment};
    \node[anchor=south west, font=\bfseries, inner sep=0pt, outer sep=5pt] at (12.8, -2.8) {Phase 3: OR Decoding};
\end{scope}

\draw[hint] (psi.east) -- node[fill=white, inner sep=2pt, font=\scriptsize] {Sample $\mathbb{P}_{\text{dem}}$} (p1_in.west);

\end{tikzpicture}

%% file: proof.tex

\section{Formal Proofs of Symmetry Abstraction in Neural CFRS}
\label{app:symmetry}
The symmetric Euclidean CVRP is completely characterized by three distinct structural symmetries that arise naturally from its input and solution spaces. 
First, the input space is invariant to spatial transformations: a Euclidean CVRP instance is defined by geographic coordinates~$(x, y)$, and any isometric transformation (translation, rotation, or reflection) strictly preserves all pairwise Euclidean distances. As the routing objective is to minimize total traveled distance, the underlying solution remains identical, and the problem is invariant to the Euclidean group~$E(2)$. 
Second, the solution space is fundamentally unordered: the optimal solution to a CVRP is not a single sequence but an unordered \emph{set} of independent TSP tours, yielding \emph{inter-route permutation invariance}. 
Third, each tour within that set is an undirected cycle anchored at the depot with no inherent traversal direction, giving \emph{intra-route traversal invariance}. Combining the last two points, a CVRP solution is fully described as an unordered set of edge-disjoint undirected cycles.

Autoregressive (AR) sequence models are intrinsically ill-suited to these symmetries. To the best of our knowledge, all existing NCO methods for the CVRP consume raw spatial coordinates directly, leaving them vulnerable to $E(2)$ transformations. Step-by-step decoding artificially imposes a strict order between independent vehicle clusters and an arbitrary directional bias on each route. To compensate, AR models rely on expensive data augmentation~\citep{Kim2022Sym-NCO:Optimization, Kwon2020Pomo:Learning, Luo2024NeuralGeneralization} or multi-start decoding~\citep{Kwon2020Pomo:Learning}. 

In contrast, Neural CFRS resolves all three symmetries under the spatial support setting by construction. We structure the formal proofs of this complete abstraction as follows:

\begin{itemize}[leftmargin=*, label={}]
    \item \textbf{[L1] Isometric Invariance of the Input Space (Lemma~\ref{lem:e2-input}):} Establishes that the symmetric Euclidean CVRP is an $E(2)$-invariant problem at the level of cost and feasibility.
    \item \textbf{[L2] Unordered Edge-Set Formulation (Lemma~\ref{lem:edge-set}):} Establishes the unordered edge-set view of the routing solution space.
    \item \textbf{[P1] Neural CFRS $E(2)$ Invariance (Proposition~\ref{prop:e2-cfrs}):} Shows that Neural CFRS inherits $E(2)$ invariance because $\Phi_{\mathrm{SMAE}}$ factors through $E(2)$-invariant features and the downstream \ac{TSP} solver is itself $E(2)$ invariant.
    \item \textbf{[P2] Inter-Route Permutation Invariance (Proposition~\ref{prop:perm}):} Establishes permutation invariance through the equivariance of upstream layers and the invariance of the set-theoretic union over \ac{TSP} tours.
    \item \textbf{[P3] Intra-Route Traversal Invariance (Proposition~\ref{prop:intraroute}):} Establishes intra-route traversal invariance directly from the underlying edge-set formulation.
\end{itemize}

\subsection*{Definitions and Pipeline Formalization}

Let the fixed spatial support $\mathcal{X}$ be defined by its geographic coordinates $X_{\mathrm{support}} \in \mathbb{R}^{(M+1) \times 2}$ (one depot plus $M$ potential customer locations). 
The \acf{SMAE} is a map over this entire support:
\[
    \Phi_{\mathrm{SMAE}} : \mathbb{R}^{(M+1) \times 2} \longrightarrow \mathbb{R}^{(M+1) \times d_{\mathrm{model}}}
\]
By architectural design, $\Phi_{\mathrm{SMAE}}$ never directly observes absolute coordinates. 
Instead, it factors completely through two $E(2)$-invariant intermediates computed over the full city: the $k$-NN adjacency $\mathcal{A}_k(X_{\mathrm{support}})$ and the pairwise Euclidean distance matrix $D(X_{\mathrm{support}})$. 
Concretely, there exists a function $\widetilde{\Phi}$ such that the learned node embeddings with global context $\Psi_{\mathrm{support}}$ is computed as:
\begin{equation}
    \Psi_{\mathrm{support}} \;=\; \Phi_{\mathrm{SMAE}}(X_{\mathrm{support}}) \;=\; \widetilde{\Phi}\!\left(\mathcal{A}_k(X_{\mathrm{support}}),\, D(X_{\mathrm{support}})\right)
    \label{eq:smae-factor}
\end{equation}
A daily \ac{CVRP} instance consists of the depot and $N$ active customers. 
Let $\mathcal{I} \subset \{1, \dots, M\}$ of size $N$ denote the indices of these active locations within the support, with index $0$ designating the depot.
The instance is defined by its strictly positive demand vector $d \in \mathbb{R}_{>0}^N$ and its local coordinates $X_{\mathrm{coord}} = X_{\mathrm{support}}[\{0\} \cup \mathcal{I}]$. 
The neural solver does not re-encode this graph; it simply gathers the corresponding pre-trained embeddings:
\[
    \Psi \;=\; \Psi_{\mathrm{support}}[\{0\} \cup \mathcal{I}] \in \mathbb{R}^{(N+1) \times d_{\mathrm{model}}}
\]

This is the formal content of the architectural choice, stated in
Section \ref{sec:inputs}, that no absolute coordinates are consumed and no positional
encodings are added.
\begin{remark}
While we define $\Phi_{\mathrm{SMAE}}$ as a map from the spatial support coordinates $X_{\mathrm{support}}$, it is important to emphasize that it never consumes these raw coordinates directly. Instead, they are used strictly to compute the $E(2)$-invariant $k$-NN adjacency matrix and the pairwise distance targets for the reconstruction loss. Crucially, once pre-training is complete, the neural network $\Phi_{\mathrm{SMAE}}$ is entirely discarded. For any downstream daily routing problem, no coordinate processing or encoding occurs; the pipeline simply extracts the relevant frozen node embeddings $\Psi_{\mathrm{support}}$ via a fast $O(1)$ table lookup. One can also view $\Psi_{\mathrm{support}}$ as a learned global spatial vocabulary, analogous to word embeddings that are pre-trained.
\end{remark}

The downstream Neural CFRS solver is a deterministic mapping
\[
    f \;:\; (\Psi, d, X_{\mathrm{coord}}) \longmapsto \mathcal{T},
\]
where $\mathcal{T} = \{T_1, \dots, T_K\}$ is the unordered set of $K$ undirected
tours returned by the pipeline. We decompose $f$ into four operators, written
with their full argument lists to avoid ambiguity:
\begin{enumerate}
    \item \textbf{Seed Generation:}
          $h_{\mathrm{seed}}(\Psi, d) \;\mapsto\; C \in \mathbb{R}^{K \times d_{\mathrm{model}}}$,
          extracting $K$ dynamic seed representations.
    \item \textbf{Global Assignment:}
          $h_{\mathrm{assign}}(\Psi, d, C) \;\mapsto\; (\Delta, \widehat{Y})$,
          where the \acf{CT} and entropic \acf{OT}
          layer output a latent distance matrix
          $\Delta \in \mathbb{R}^{N \times K}$ and a continuous transport plan
          $\widehat{Y} \in [0,1]^{N \times K}$.
    \item \textbf{OR Decoding:}
          $h_{\mathrm{CAP}}(\Delta, \widehat{Y}, d, Q) \;\mapsto\; \overline{Y}$,
          where the MIP solver enforces uniform fleet capacity $Q$ to recover
          the discrete boolean assignment matrix
          $\overline{Y} \in \{0,1\}^{N \times K}$.
    \item \textbf{TSP Recovery:}
          $h_{\mathrm{TSP}}(\overline{Y}, X_{\mathrm{coord}})
          \;\mapsto\; \mathcal{T}$,
          mapping each column of $\overline{Y}$ to an independent TSP tour over
          the corresponding customer subset, computed under Euclidean
          distances derived from $X_{\mathrm{coord}}$.\footnote{We pass
          $X_{\mathrm{coord}}$ explicitly to $h_{\mathrm{TSP}}$ to make the
          dependence on Euclidean distances visible. Any standard Euclidean
          TSP solver depends on $X_{\mathrm{coord}}$ only through the
          pairwise distance matrix $D(X_{\mathrm{coord}})$ and is therefore
          itself $E(2)$-invariant.}
\end{enumerate}
The full pipeline is the composition
\begin{align}
    C            &= h_{\mathrm{seed}}(\Psi, d), \nonumber \\
    (\Delta, \widehat{Y}) &= h_{\mathrm{assign}}(\Psi, d, C), \nonumber \\
    \overline{Y} &= h_{\mathrm{CAP}}(\Delta, \widehat{Y}, d, Q), \nonumber \\
    \mathcal{T}  &= h_{\mathrm{TSP}}(\overline{Y}, X_{\mathrm{coord}}).
    \label{eq:pipeline}
\end{align}

\begin{lemma}[Isometric Invariance of the Symmetric Euclidean Input Space]
\label{lem:e2-input}
Let $\mathcal{R}$ denote the set of capacity-feasible routing solutions for
demand vector $d$ and fleet capacity $Q$. The total cost of a routing
solution $R \in \mathcal{R}$ under coordinates $X_{\mathrm{coord}}$ is
\[
    \mathcal{C}(R \mid X_{\mathrm{coord}})
    \;=\; \sum_{(i,j) \in R} \lVert x_i - x_j \rVert_2.
\]
For any isometry $g \in E(2)$ acting on $\mathbb{R}^2$, the set of optimal
routing solutions is preserved:
\[
    \arg\min_{R \in \mathcal{R}} \mathcal{C}(R \mid g \cdot X_{\mathrm{coord}})
    \;=\;
    \arg\min_{R \in \mathcal{R}} \mathcal{C}(R \mid X_{\mathrm{coord}}).
\]
\end{lemma}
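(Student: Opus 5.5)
The approach is to prove the stronger statement that the cost functional itself is unchanged pointwise, $\mathcal{C}(R \mid g\cdot X_{\mathrm{coord}}) = \mathcal{C}(R \mid X_{\mathrm{coord}})$ for every $R \in \mathcal{R}$. The equality of argmin sets then follows immediately.

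The first step is to fix what $g$ is. Any $g \in E(2)$ can be written as $g\cdot x = Ax + b$ with $A \in O(2)$ and $b \in \mathbb{R}^2$; equivalently, $g$ is a distance-preserving map of $\mathbb{R}^2$. The action on $X_{\mathrm{coord}}$ is row-wise, $(g\cdot X_{\mathrm{coord}})_i = g\cdot x_i$. For each pair $(i,j)$ we then have
\[
\lVert g\cdot x_i - g\cdot x_j\rVert_2 = \lVert A(x_i - x_j)\rVert_2 = \lVert x_i - x_j\rVert_2,
\]
since $A^\top A = I$. In other words, the pairwise distance matrix satisfies $D(g\cdot X_{\mathrm{coord}}) = D(X_{\mathrm{coord}})$.

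The second step is to check that the feasible set $\mathcal{R}$ is unaffected by $g$. I would note that $\mathcal{R}$ is defined purely combinatorially: it consists of collections of depot-anchored cycles covering each customer exactly once, with the demand on each cycle at most $Q$. It therefore depends only on the index set, the demand vector $d$, and $Q$. None of these involve coordinates, so $\mathcal{R}$ is literally the same set before and after applying $g$. This is the point that needs an explicit sentence: the lemma quantifies over the same $\mathcal{R}$ on both sides, so feasibility invariance is part of the argument and not something to assume silently.

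The third step is to sum the edgewise equality over $(i,j)\in R$, which gives $\mathcal{C}(R\mid g\cdot X_{\mathrm{coord}})=\mathcal{C}(R\mid X_{\mathrm{coord}})$ for every $R\in\mathcal{R}$. Two functions that agree on a common domain have identical minimizer sets, and this establishes the claim. As a side benefit, the same argument shows that the optimal value is also preserved.

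No step is a real obstacle. The only care needed is definitional. First, I must make explicit that both the cost and the feasibility depend on coordinates only through $D(X_{\mathrm{coord}})$, or not at all. Second, I should remark that the argmin is taken over sets, so ties cause no difficulty. This factorization through $D$ is what will be reused for the TSP solver in Proposition~\ref{prop:e2-cfrs}.
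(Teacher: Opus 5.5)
Your proposal is correct and matches the paper's proof: both show that an isometry preserves every pairwise Euclidean distance, so $\mathcal{C}(R \mid g\cdot X_{\mathrm{coord}}) = \mathcal{C}(R \mid X_{\mathrm{coord}})$ for every $R$. Both also note that $\mathcal{R}$ depends only on $d$ and $Q$, so the feasible set is unchanged and the $\arg\min$ sets coincide. Your explicit form $g\cdot x = Ax+b$ with $A \in O(2)$ is only a more concrete way of invoking what the paper calls ``the definition of an isometry.''
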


\begin{proof}
By the definition of an isometry,
$\lVert g(x_i) - g(x_j) \rVert_2 = \lVert x_i - x_j \rVert_2$ for all $i, j$.
The cost of any route $R$ evaluated under transformed coordinates therefore
satisfies
\[
    \mathcal{C}(R \mid g \cdot X_{\mathrm{coord}})
    \;=\; \sum_{(i,j) \in R} \lVert g(x_i) - g(x_j) \rVert_2
    \;=\; \sum_{(i,j) \in R} \lVert x_i - x_j \rVert_2
    \;=\; \mathcal{C}(R \mid X_{\mathrm{coord}}).
\]
The transformation $g$ acts only on coordinates and alters neither customer
demands nor vehicle capacities, so the feasible set $\mathcal{R}$ is
unchanged. Both the search space and the cost mapping are invariant, hence
the $\arg\min$ sets coincide.
\end{proof}

\begin{lemma}[Unordered Edge-Set Formulation of the Solution Space]
\label{lem:edge-set}
Any complete CVRP solution $\mathcal{T}$ is fully characterized by an
unordered set of pairwise edge-disjoint undirected cycle edge sets.
\end{lemma}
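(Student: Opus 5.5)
Our approach is to make ``fully characterized'' precise as an injective (in fact bijective) correspondence between CVRP solutions, taken modulo the two symmetries, and families of edge sets. First we fix the representation. A CVRP solution is a collection of $K$ routes, each a closed walk $(0, i_1, \dots, i_{m}, 0)$ starting and ending at the depot, such that the customer sets of the routes partition $\mathcal{I}$ and each route satisfies the capacity constraint. To route $r$ we assign the undirected edge multiset
\[
E(r) = \bigl\{\{0,i_1\},\{i_1,i_2\},\dots,\{i_{m-1},i_m\},\{i_m,0\}\bigr\},
\]
and to the solution we assign the unordered family $\{E(r_1),\dots,E(r_K)\}$. The cost $\mathcal{C}$ used in Lemma~\ref{lem:e2-input} depends only on this family. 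The same holds for feasibility, since the customer set of a route is recovered as the vertex set of $E(r)$ minus the depot, and the capacity constraint depends only on that set.

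Second, we show the map is invariant under the two symmetries. Reversing a route gives the walk $(0,i_m,\dots,i_1,0)$, which produces exactly the same unordered pairs, so $E(r)$ is unchanged. Permuting the vehicle labels only reorders the family, and the family is taken as unordered, so it is unchanged as well.

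Third, we establish injectivity modulo these symmetries. Consider a route with $m \ge 2$ customers. Its edge set is a simple cycle through the depot, because customers are visited exactly once. A simple cycle determines its cyclic vertex sequence up to the choice of direction, and fixing the depot as the start point leaves exactly the two orientations. A route with $m=1$ gives the degenerate ``cycle'' $\{0,i\}$ traversed twice, which we treat as a multiedge; the main care of the proof lies in this case. A route with $m=2$ has $E(r)$ equal to the triangle $\{0,i_1\},\{i_1,i_2\},\{i_2,0\}$, which is fine. We will state explicitly that cycles are taken in the multigraph sense so that the $m=1$ case is covered.

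Finally, we address disjointness. Two distinct routes share no customer, so any common edge would have to join the depot to the depot, and no such edge exists. Hence the cycles are pairwise edge-disjoint; they share only the vertex $0$. The one exception would be a route whose edge sets repeat, which is excluded because each customer lies in exactly one route. Together these steps show that solutions modulo inter-route relabeling and intra-route reversal correspond one-to-one with unordered families of edge-disjoint undirected depot cycles, which is the claim.
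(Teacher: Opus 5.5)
Your proof is correct and shares the paper's core. You use the same undirected edge (multi)set per tour and the same disjointness argument: two routes' vertex sets meet only in the depot, and there is no depot--depot edge. You also make the same observation that reversal and vehicle relabeling cannot change a set-valued object.

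What you add is the converse direction, which the paper only asserts. The paper shows that the family $\{E_1,\dots,E_K\}$ is \emph{invariant} under relabeling and reversal, then claims that it ``fully characterizes'' the solution. You instead show that nothing else is lost: each member's vertex set returns the customer set, a simple depot cycle returns its visiting order up to reversal, and cost and capacity feasibility factor through the family. You also handle the single-customer multiedge case inside the proof rather than in a follow-up remark. This is what actually justifies treating the edge-set family as a faithful quotient of the solution space.

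Two points need tightening. First, ``in fact bijective'' only holds onto the image. Edge-disjointness alone does not characterize CVRP solutions: $(0,a,b,0)$ and $(0,c,a,d,0)$ are edge-disjoint yet both visit customer $a$. The codomain must therefore be families of depot cycles that pairwise meet only at the depot, cover $\mathcal{I}$, and respect capacity. Second, the sentence about ``a route whose edge sets repeat'' should say that pairwise disjoint \emph{nonempty} edge sets are automatically distinct, so the unordered family has exactly $K$ members. This requires every route to serve at least one customer; otherwise two unused vehicles would collapse into one member. The paper's proof also makes this assumption only implicitly.
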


\begin{proof}
A valid CVRP solution assigns customers to $K$ vehicles in disjoint subsets
$V_1, \dots, V_K$, each served by a tour that starts and ends at the depot.
Tour $T_j$ traverses an ordered sequence
$(v_0^{(j)}, v_1^{(j)}, \dots, v_{n_j}^{(j)})$ with
$v_0^{(j)} = v_{n_j}^{(j)} = x_0$ (the depot) and
$\{v_1^{(j)}, \dots, v_{n_j-1}^{(j)}\} = V_j$. Define its undirected edge
set
\begin{equation}
    E_j \;\coloneqq\;
    \bigl\{\{v_{k}^{(j)}, v_{k+1}^{(j)}\}
           \;\big|\; k = 0, 1, \dots, n_j - 1\bigr\}.
    \label{eq:edge-set}
\end{equation}
Because the customer subsets $V_j$ are disjoint, no non-depot edge appears in
two different $E_j$, and depot edges connect the depot to distinct customers
in distinct routes; hence the edge sets $E_1, \dots, E_K$ are pairwise
disjoint. The global solution
$\mathcal{T} = \{E_1, E_2, \dots, E_K\}$ is therefore an unordered set of
pairwise edge-disjoint undirected edge sets. Because each $E_j$ is itself a
set, it has no distinguished starting element and no traversal direction, and
because $\mathcal{T}$ is itself a set, it imposes no ordering on the routes.
Any vehicle relabeling or cycle reversal yields the same mathematical object
$\mathcal{T}$.
\end{proof}
\begin{remark}
    While Lemma~\ref{lem:e2-input} and Lemma~\ref{lem:edge-set} define routing solutions as sets of undirected edges to preserve traversal invariance, we note a specific mathematical edge case regarding single-customer routes. For a route visiting only one customer $v_1$, the traversal sequence $(x_0, v_1, x_0)$ yields two identical undirected edges connected to the depot. To prevent these duplicate edges from collapsing under standard set operations—which would erroneously halve the true Euclidean cost calculated in Lemma 1—the edge sets $E_j$ and the global solution space $\mathcal{T}$ are strictly evaluated as \textbf{multisets}. This ensures exact edge multiplicities are preserved without requiring directed edges, thereby maintaining the intra-route traversal invariance established in the subsequent proofs. For the sake of simplicity and to ensure the core proofs remain intuitive, we maintain the standard terminology of sets.
\end{remark}
\begin{remark}
We note that the global minimum of a \ac{CVRP} instance can technically possess multiple distinct optimal configurations. 
For example, consider a pathological case where 12 customer nodes are distributed evenly around a circle (like a clock face) with the depot at the center. 
If each customer has a demand of 1 and the vehicle capacity is 2, the optimal solution dispatches vehicles to pairs of adjacent nodes.
Due to the exact rotational symmetry, shifting the pairwise assignments by one node yields a completely distinct set of routes with the exact same total Euclidean cost. 
However, such highly symmetric instances are pathological and will almost never occur in standard, randomly sampled \ac{CVRP} benchmarks.
For multiple distinct solutions to yield the exact same objective value, their respective sums of continuous pairwise distances must perfectly equate.
When customer coordinates are drawn independently from a continuous probability measure (e.g., uniformly over the unit square), the probability of sampling an instance with such perfectly balanced distance combinations is measure zero.
Therefore, if $h_\text{CAP}$ is solved to optimality, it is a unique solution.
\end{remark}

\begin{proposition}[Neural CFRS $E(2)$ Invariance]
\label{prop:e2-cfrs}
For any $g \in E(2)$ acting on $X_{\mathrm{coord}}$, the output
$\mathcal{T}$ of the Neural CFRS pipeline~\eqref{eq:pipeline} is unchanged.
\end{proposition}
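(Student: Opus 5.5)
The plan is to trace the composition in \eqref{eq:pipeline} stage by stage and show that every argument depending on $X_{\mathrm{coord}}$ (or on $X_{\mathrm{support}}$) enters only through $E(2)$-invariant quantities. The action of $g$ is understood as acting on the whole spatial support, $X_{\mathrm{support}} \mapsto g\cdot X_{\mathrm{support}}$. This induces $X_{\mathrm{coord}} \mapsto g\cdot X_{\mathrm{coord}}$ on the gathered rows, since row selection commutes with applying $g$ pointwise. The index set $\mathcal{I}$, the demands $d$ and the capacity $Q$ are untouched by $g$.

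First I establish invariance of the embeddings. By definition of an isometry, $\lVert g(x_i)-g(x_j)\rVert_2=\lVert x_i-x_j\rVert_2$ for all pairs, so $D(g\cdot X_{\mathrm{support}})=D(X_{\mathrm{support}})$. The $k$-NN relation is defined purely from the ordering of these distances, so $\mathcal{A}_k(g\cdot X_{\mathrm{support}})=\mathcal{A}_k(X_{\mathrm{support}})$ as well. I fix a deterministic tie-breaking rule by index, so that ties are resolved identically on both sides. The factorization \eqref{eq:smae-factor} then gives $\Phi_{\mathrm{SMAE}}(g\cdot X_{\mathrm{support}})=\Phi_{\mathrm{SMAE}}(X_{\mathrm{support}})$, hence $\Psi_{\mathrm{support}}$ is unchanged. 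The gathered $\Psi=\Psi_{\mathrm{support}}[\{0\}\cup\mathcal{I}]$ is unchanged too.

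Next I propagate this through the neural stages. The maps $h_{\mathrm{seed}}$ and $h_{\mathrm{assign}}$ take only $(\Psi,d)$ and $(\Psi,d,C)$. Their attention masks are the $k$-NN masks derived from the invariant $\mathcal{A}_k$; seeds inherit the masks of their source nodes. So with all inputs unchanged, $C$, $\Delta$ and $\widehat{Y}$ are unchanged, because they are deterministic functions of those inputs. Determinism includes the greedy decoding of Algorithm~\ref{alg:cagd} with fixed tie-breaking and the fixed number of Sinkhorn iterations. Likewise $h_{\mathrm{CAP}}(\Delta,\widehat{Y},d,Q)$ has no coordinate input, so $\overline{Y}$ is unchanged. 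Here I need that the MIP solver is deterministic; if the exact CAP is solved to optimality, uniqueness (the remark above) also suffices. The random $10\%$ pruning fallback needs a fixed seed, which I would state as a standing assumption.

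Finally, for $h_{\mathrm{TSP}}(\overline{Y},X_{\mathrm{coord}})$, the footnote states that a Euclidean TSP solver depends on $X_{\mathrm{coord}}$ only through $D(X_{\mathrm{coord}})$, which is invariant by the computation above. Applying Lemma~\ref{lem:e2-input} to each single-vehicle subproblem shows that the optimal tour sets coincide. Under Lemma~\ref{lem:edge-set}, the returned object is an unordered set of undirected edge sets, so equality of $\mathcal{T}$ is well defined irrespective of labeling.

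The main obstacle is the last stage and the determinism hypotheses generally. Invariance of the arg-min set of tours does not by itself pin down which tour a heuristic solver such as LKH-3 returns. I would handle this by making explicit that the solver is a deterministic function of $D(X_{\mathrm{coord}})$, with fixed random seed and index-based tie-breaking. With that, equal inputs give equal outputs and the composition closes the argument.
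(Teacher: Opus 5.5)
Your proposal is correct and follows essentially the same route as the paper's proof. Invariance of $D$ and $\mathcal{A}_k$ under $g$, together with the factorization \eqref{eq:smae-factor}, fixes $\Psi_{\mathrm{support}}$ and hence $\Psi$, so $C$, $\Delta$, $\widehat{Y}$ and $\overline{Y}$ are unchanged, and $h_{\mathrm{TSP}}$ sees $X_{\mathrm{coord}}$ only through the invariant $D(X_{\mathrm{coord}})$. Your extra material makes explicit what the paper leaves implicit when it calls $f$ a deterministic mapping and writes $h_{\mathrm{seed}}, h_{\mathrm{assign}}$ as functions of $(\Psi,d)$ alone: the tie-breaking and determinism assumptions for the greedy decoding, the MIP solver, the random pruning fallback and the TSP heuristic, and the observation that the $k$-NN attention masks are distance-derived and therefore also invariant.
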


\begin{proof}
Both the $k$-NN adjacency $\mathcal{A}_k$ and the pairwise distance matrix $D$ are intrinsically $E(2)$-invariant. 
For every isometry $g \in E(2)$ acting on the full spatial support $X_{\mathrm{support}}$, we have:
\[
\mathcal{A}_k(g \cdot X_{\mathrm{support}}) = \mathcal{A}_k(X_{\mathrm{support}}), \qquad D(g \cdot X_{\mathrm{support}}) = D(X_{\mathrm{support}})
\]
Combined with the factorization in Equation~\eqref{eq:smae-factor}, the global spatial vocabulary is strictly invariant:
\[
\Phi_{\mathrm{SMAE}}(g \cdot X_{\mathrm{support}}) \;=\; \widetilde{\Phi}\!\left(\mathcal{A}_k(X_{\mathrm{support}}), D(X_{\mathrm{support}})\right) \;=\; \Psi_{\mathrm{support}}
\]
Because the global vocabulary $\Psi_{\mathrm{support}}$ does not change under the isometry $g$, the gathered embeddings for any specific daily instance, $\Psi = \Psi_{\mathrm{support}}[\mathcal{I}]$, are consequently unchanged.
The operators $h_{\mathrm{seed}}$, $h_{\mathrm{assign}}$, and
$h_{\mathrm{CAP}}$ depend only on $\Psi$ and
$d$, neither of which is affected by $g$; hence the intermediate quantities
$C$, $\Delta$, $\widehat{Y}$, and $\overline{Y}$ are unchanged. Finally,
$h_{\mathrm{TSP}}$ depends on $X_{\mathrm{coord}}$ only through the
pairwise distance matrix $D(X_{\mathrm{coord}})$ (any Euclidean TSP solver
is itself $E(2)$-invariant), which is preserved by $g$. Therefore, the pipeline:
\[
    f(\Psi, d, X_{\mathrm{coord}}) \;=\; \mathcal{T}
\]
has all inputs that are $E(2)$-invariant and thus the output $\mathcal{T}$ is unchanged for any $g$.
\end{proof}

\begin{proposition}[Neural CFRS Inter-Route Permutation Invariance]
\label{prop:perm}
For any permutation $\sigma \in S_K$ with permutation matrix
$P_\sigma \in \{0,1\}^{K \times K}$, applying $P_\sigma^\top$ to the
columns of the discrete assignment matrix $\overline{Y}$ does not change the
output of the TSP recovery operator:
\[
    h_{\mathrm{TSP}}(\overline{Y} P_\sigma^\top, X_{\mathrm{coord}})
    \;=\; h_{\mathrm{TSP}}(\overline{Y}, X_{\mathrm{coord}}).
\]
Consequently, the full pipeline $f$ is invariant to any reindexing of the $K$
seeds (and hence of the $K$ vehicles).
\end{proposition}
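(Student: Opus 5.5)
The plan is to prove the first identity by unpacking what $h_{\mathrm{TSP}}$ does column by column, and then to lift it to the whole pipeline through the composition~\eqref{eq:pipeline}.

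\textbf{Step 1: column permutation permutes the clusters.} Right-multiplying by $P_\sigma^\top$ only permutes the columns of $\overline{Y}$. The $j$-th column of $\overline{Y}P_\sigma^\top$ is the $\sigma^{-1}(j)$-th column of $\overline{Y}$ (or $\sigma(j)$, depending on convention, which does not matter). So the family of customer subsets $V_j=\{i:\overline{Y}_{ij}=1\}$ is the same family, just reindexed: $\{V'_1,\dots,V'_K\}=\{V_1,\dots,V_K\}$ as multisets of subsets.

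\textbf{Step 2: tours depend only on the subset.} By definition, $h_{\mathrm{TSP}}$ solves an independent TSP on each column. It uses the subset $V_j\cup\{x_0\}$ and the distances $D(X_{\mathrm{coord}})$, and never uses the index $j$. Hence the tour for $V'_j$ equals the tour for $V_{\sigma^{-1}(j)}$. This requires the per-column TSP routine to be a deterministic function of its vertex set. I will state that as the standing assumption already implicit in calling $f$ deterministic, for example by fixing the solver's seed and tie-breaking by customer index rather than by column index. The unique-optimum remark supports this when the subproblems are solved exactly.

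\textbf{Step 3: union over an unordered collection.} The output is $\mathcal{T}=\{E_1,\dots,E_K\}$, an unordered (multi)set of undirected edge sets, as in Lemma~\ref{lem:edge-set}. A union over a reindexed family equals the original union, since set union is commutative and associative. This gives the displayed identity.

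\textbf{Step 4: the consequence for $f$.} Relabeling the seeds means replacing $C$ by $P_\sigma C$. I would argue that $h_{\mathrm{assign}}$ is equivariant under this relabeling. The Clustering Transformer without positional encodings is permutation-equivariant over its tokens, and the seed type embedding is shared across seeds, so $\Delta\mapsto\Delta P_\sigma^\top$. The Sinkhorn iterations have uniform column marginals $1.0$ and treat columns symmetrically, so $\widehat{Y}\mapsto\widehat{Y}P_\sigma^\top$. The CAP objective and constraints of \eqref{eqn:cap} are symmetric in $j$, and so are the thresholding rules, so the optimal $\overline{Y}$ maps to $\overline{Y}P_\sigma^\top$. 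For this last point I rely on the uniqueness remark, or on a tie-breaking rule that is independent of column order. Step 3 then collapses this equivariance into invariance.

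\textbf{Main obstacle.} The hard part is equivariance of $h_{\mathrm{CAP}}$. A MIP solver's output under ties, and the random 10\% pruning fallback, need not be equivariant. I would handle this by appealing to almost-sure uniqueness of the optimum, and by treating the fallback's randomness as a common seed applied to node indices rather than column indices.
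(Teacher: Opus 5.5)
Your proposal is correct and follows the paper's proof essentially step for step. The displayed identity comes from writing $h_{\mathrm{TSP}}$ as a union of per-column TSP calls, which is unchanged under the reindexing $j \mapsto \sigma(j)$; the statement about $f$ comes from column-equivariance of the Clustering Transformer, the Sinkhorn layer and the MIP decoder, which $h_{\mathrm{TSP}}$ then collapses into invariance. The paper simply asserts that the per-cluster TSP is deterministic and that $h_{\mathrm{CAP}}$ is equivariant, so your explicit standing assumptions make precise what it leaves implicit. One caution: Gurobi is stopped at a $0.1\%$ MIP gap or a 100s time limit, so the returned $\overline{Y}$ need not be optimal, and it is your column-order-independence assumption on the decoder, not the uniqueness remark, that actually carries that step.
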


\begin{proof}
The TSP recovery operator processes each column of its input matrix
independently and aggregates the resulting tours through the set-theoretic
union:
\[
    h_{\mathrm{TSP}}(\overline{Y}, X_{\mathrm{coord}})
    \;=\; \bigcup_{j=1}^{K}
          \mathrm{TSP}\!\left(\overline{Y}_{\cdot, j},\, X_{\mathrm{coord}}\right).
\]
With our convention for $P_\sigma$,
$(\overline{Y} P_\sigma^\top)_{\cdot, j}
 = \overline{Y}_{\cdot,\, \sigma(j)}$,
so
\[
    h_{\mathrm{TSP}}(\overline{Y} P_\sigma^\top, X_{\mathrm{coord}})
    \;=\; \bigcup_{j=1}^{K}
          \mathrm{TSP}\!\left(\overline{Y}_{\cdot,\sigma(j)},\,
                              X_{\mathrm{coord}}\right)
    \;=\; \bigcup_{j=1}^{K}
          \mathrm{TSP}\!\left(\overline{Y}_{\cdot, j},\,
                              X_{\mathrm{coord}}\right)
    \;=\; h_{\mathrm{TSP}}(\overline{Y}, X_{\mathrm{coord}}),
\]
where the second equality holds because the union over $j$ is unchanged
under the reindexing $j \mapsto \sigma(j)$ of a finite index set.

It remains to argue that any reindexing of the seeds produced by
$h_{\mathrm{seed}}$ manifests precisely as a column permutation of
$\overline{Y}$. Because \ac{CT} attends to the seed
tokens equivariantly with respect to their order, and the log-domain
Sinkhorn--Knopp algorithm is itself permutation-equivariant in its target
marginals~\cite{Cuturi2013SinkhornTransport}, permuting the $K$ seeds by $\sigma$
permutes the columns of both $\Delta$ and $\widehat{Y}$ by~$\sigma$. The MIP
decoder $h_{\mathrm{CAP}}$ then yields $\overline{Y}$ with its columns
permuted in exactly the same way. The display above shows that
$h_{\mathrm{TSP}}$ collapses this column equivariance into invariance, which
is the required statement for $f$.
\end{proof}

\begin{remark}
    While Proposition~\ref{prop:perm} explicitly addresses \emph{seed-index} (column) permutations via the symmetric group $S_K$, the exact same architectural mechanism seamlessly resolves \emph{customer node} (row) permutations via $S_N$. \ac{CT} processes the combined sequence of customer and seed nodes to output a refined representation matrix of size $(N+K+1) \times d_{\text{model}}$. Because the self-attention mechanism is strictly permutation equivariant, any reordering of the input customers or seeds applies the exact same permutation to these output embeddings. Crucially, the latent cost matrix $\Delta$ is computed via pairwise Euclidean distances between these refined customer and seed embeddings ($\Delta_{ij} = \|\overline{h}_i - \overline{c}_j\|_2$). 
    Consequently, any permutation of the input customer nodes propagates equivariantly to the rows of $\Delta$ and the assignment matrix $\overline{Y}$. Conversely, because the intermediate seed generation (Algorithm~\ref{alg:cagd}) extracts anchors deterministically based on value rather than sequence index, the columns of $\Delta$ remain strictly invariant to input permutations. 
    Since the \ac{TSP} recovery operator ultimately evaluates these discrete clusters as unordered sets of vertices, the pipeline mathematically collapses both row equivariance and column invariance into strict global permutation invariance.
\end{remark}

\begin{proposition}[Neural CFRS Intra-Route Traversal Invariance]
\label{prop:intraroute}
The pipeline $f$ is invariant to intra-route traversal direction (e.g.,
clockwise vs.\ counter-clockwise generation).
\end{proposition}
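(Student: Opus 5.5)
The plan is to reduce the statement to Lemma~\ref{lem:edge-set}, using the decomposition~\eqref{eq:pipeline}. The key observation is that no operator upstream of $h_{\mathrm{TSP}}$ ever represents a visiting order. $h_{\mathrm{seed}}$, $h_{\mathrm{assign}}$ and $h_{\mathrm{CAP}}$ consume and produce only $\Psi$, $d$, $C$, $\Delta$, $\widehat{Y}$ and the boolean matrix $\overline{Y}$. Each column $\overline{Y}_{\cdot,j}$ is the indicator vector of a vertex subset $V_j$. A traversal direction is therefore not even a variable of these stages, so "clockwise vs.\ counter-clockwise generation" can only enter in the final stage. It then suffices to show that $h_{\mathrm{TSP}}$ returns an object that does not distinguish a tour from its reversal.

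First, I would formalize a traversal reversal as the map $\rho$ sending an ordered tour $(v_0, v_1, \dots, v_{n})$ with $v_0 = v_n = x_0$ to $(v_n, v_{n-1}, \dots, v_0)$. I would also allow cyclic rotations of the starting point, since these are the same kind of ambiguity. The whole argument is carried out at the level of the edge set $E_j$ in~\eqref{eq:edge-set}.

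Second, I would verify directly that $E_j$ is fixed by $\rho$. The consecutive pairs of the reversed sequence are $\{v_{k+1}, v_k\}$ for $k = 0, \dots, n-1$, and these coincide with $\{v_k, v_{k+1}\}$ as unordered pairs. So $E_j(\rho(T_j)) = E_j(T_j)$, and the same holds for cyclic shifts. Invoking Lemma~\ref{lem:edge-set}, the output $\mathcal{T} = \{E_1, \dots, E_K\}$ is unchanged when any subset of routes is reversed independently. Lemma~\ref{lem:e2-input} additionally gives that the cost $\mathcal{C}$ is the same for both directions, because $\lVert x_i - x_j\rVert_2$ is symmetric. Consequently, a symmetric Euclidean TSP solver has no cost-based reason to prefer one orientation.

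The main obstacle is the degenerate single-customer route $(x_0, v_1, x_0)$. In that case the two edges coincide, and a naive set formulation collapses them. I would handle this by appealing to the multiset convention in the remark following Lemma~\ref{lem:edge-set}. Reversal maps the multiset $\{\{x_0,v_1\},\{x_0,v_1\}\}$ to itself, so both multiplicities and cost are preserved.

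A subtler point is that the solver might internally emit a directed sequence whose orientation depends on implementation details. I would address this by defining $h_{\mathrm{TSP}}$'s output as the induced undirected edge multiset, as the pipeline specifies. Any directional artifact of the solver is then quotiented out by construction, and the conclusion follows.
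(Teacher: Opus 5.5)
Your proposal is correct and takes essentially the same approach as the paper: both reduce the claim to Lemma~\ref{lem:edge-set}, since $h_{\mathrm{TSP}}$ returns for each column $\overline{Y}_{\cdot,j}$ an undirected edge set $E_j$ that is fixed under reversal and cyclic rotation, and $\mathcal{T}=\{E_1,\dots,E_K\}$ is itself a set. Your explicit points that the upstream operators never encode a visiting order, and that the single-customer route needs the multiset convention, spell out things the paper leaves implicit in the proof or places in a separate remark.
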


\begin{proof}
For each cluster defined by column $\overline{Y}_{\cdot, j}$ with vertex
set $V_j \cup \{x_0\}$, $h_{\mathrm{TSP}}$ returns an undirected
Hamiltonian cycle on $V_j \cup \{x_0\}$. By Lemma~\ref{lem:edge-set} this
cycle is encoded as an undirected edge set $E_j$ of the form given in
Equation~\eqref{eq:edge-set}, which is by construction invariant under
cyclic rotation and reversal of the underlying vertex sequence.
Aggregating across clusters,
$\mathcal{T} = \{E_1, \dots, E_K\}$ is itself a set, hence intra-route
traversal direction does not affect the output.
\end{proof}

\subsection*{Summary}

Lemma~\ref{lem:e2-input} establishes that the symmetric Euclidean CVRP is an
$E(2)$-invariant problem at the level of cost and feasibility, and
Lemma~\ref{lem:edge-set} establishes the unordered edge-set view of the
solution space. Proposition~\ref{prop:e2-cfrs} shows that Neural CFRS
inherits $E(2)$ invariance because $\Phi_{\mathrm{SMAE}}$ factors through
$E(2)$-invariant features and the only downstream consumer of
$X_{\mathrm{coord}}$ is the TSP solver, which is itself $E(2)$-invariant.
Proposition~\ref{prop:perm} establishes inter-route permutation invariance
through equivariance of the upstream layers and invariance of the union over
TSP tours. Proposition~\ref{prop:intraroute} establishes intra-route
traversal invariance directly from the edge-set formulation. Together these
results give a complete, by-construction abstraction of the three structural
symmetries of the symmetric Euclidean CVRP within the Neural CFRS
framework.